\newtheorem{theorem}{Theorem}[section]
\DeclareMathOperator{\MMD}{MMD}
\DeclareMathOperator{\HS}{HS}
\newcommand{\al}{\alpha}
\newcommand{\e}{\epsilon}
\newcommand{\la}{\lambda}
\newcommand{\s}{\sigma}
\newcommand{\R}{\mathbb{R}}
\newcommand{\N}{\mathbb{N}}
\newcommand{\G}{\mathcal{G}}
\newcommand{\La}{\mathcal{L}}
\newcommand{\W}{\mathcal{W}}
\newcommand{\X}{\mathcal{X}}
\newcommand{\vp}[2]{\varphi\left(\frac{\|#1\|^p}{#2^p}\right)}
\newcommand{\cq}[1]{c_{\al(#1)}}
\def\i{\infty}
\def\l{\langle}
\def\v{\varphi}
\def\r{\rangle}
\def\m{\mathbf}
\def\H{\mathcal{H}}
\def\O{\mathcal{O}}
\newcommand{\sampSet}{\mathcal{X}}
\newcommand{\sampSetOther}{\mathcal{Y}}
\newcommand{\sampSetLong}{\{x_1, \dots, x_{\nsamp}\}}
\newcommand{\sampSetShort}{\{x_i\}_{1}^{\nsamp}}
\newcommand{\idealP}{\ensuremath{{p}}}
\newcommand{\idealK}{\ensuremath{{\m{K}}}}
\newcommand{\empP}{\ensuremath{\widehat p}}
\newcommand{\empK}{\ensuremath{K}}
\newcommand{\shCent}{\mathcal{C}}
\newcommand{\shCentLong}{\{c_1,\dots,c_{\ncent}\}}
\newcommand{\weightSet}{\W}
\newcommand{\weightSetLong}{\{w_1, \dots, w_{\ncent}\}}
\newcommand{\shP}{\ensuremath{\widetilde p}}
\newcommand{\shK}{\ensuremath{\widetilde K}}
\newcommand{\idealshK}{\ensuremath{\widetilde{\m{K}}}}
\newcommand{\shG}{\ensuremath{\widetilde G}}
\newcommand{\CK}{K^{\shCent}}
\newcommand{\quaK}{\ensuremath{\oline{K}}}
\newcommand{\quaC}{\ensuremath{\oline{\mathcal{C}}}}
\newcommand{\card}[1]{\left|{#1}\right|}
\newcommand{\nsamp}{n}
\newcommand{\ncent}{m}
\newcommand{\ntrain}{n_{t}}
\newcommand{\neigs}{r}
\newcommand{\fmap}{\ensuremath{\psi}}
\newcommand{\shSet}{S}
\newcommand{\shrad}{\varepsilon}
\newcommand{\eqlabel}[1]{\label{eq:#1}}
\renewcommand{\eqref}[1]{(\ref{eq:#1})}
\renewcommand{\paragraph}{%
  \@startsection{paragraph}{4}%
  {\z@}{1.5ex \@plus 0.5ex \@minus .2ex}{-1em}%
  {\normalfont\normalsize\bfseries}%
}
\newcommand*\oline[1]{%
  \vbox{%
    \hrule height 0.25pt%                  % Line above with certain width
    \kern 0.25ex%                          % Distance between line and content
    \hbox{%
      \kern-0.2em%                        % Distance between content and left side of box, negative values for lines shorter than content
      \ifmmode#1\else\ensuremath{#1}\fi%  % The content, typeset in dependence of mode
      \kern-0.2em%                        % Distance between content and left side of box, negative values for lines shorter than content
    }% end of hbox
  }% end of vbox
}
\title{Reduced-Set Kernel Principal Components Analysis for Improving the Training and Execution Speed
  of Kernel Machines}
  \author{Hassan A. Kingravi
  \thanks{School of Electrical and Computer Engineering, Georgia Institute of Technology.}
  \and Patricio A. Vela\footnotemark[1] \and 
  Alexandar Gray \thanks{College of Computing, Georgia Institute of Technology.} 
  }
  \date{}
\begin{document}

\maketitle

\begin{abstract}
%\small\baselineskip=9pt
This paper \footnote{This paper first appeared in \emph{SIAM International Conference on Data Mining, 2013.}}
presents a practical, and theoretically well-founded,
approach to improve the speed of kernel manifold learning algorithms
relying on spectral decomposition.
% \footnote{}.
Utilizing recent insights in
kernel smoothing and learning with integral operators, we propose
Reduced Set KPCA (RSKPCA), which also suggests an easy-to-implement method
to remove or replace samples with minimal effect on the empirical
operator. A simple data point selection procedure is given to generate
a substitute density for the data, with accuracy that is governed by a
user-tunable parameter $\ell$.  The effect of the approximation on
the quality of the KPCA solution, in terms of spectral and operator
errors, can be shown directly in terms of the density estimate error
and as a function of the parameter $\ell$. We show in experiments that
RSKPCA can improve both training and evaluation time of KPCA by up to
an order of magnitude, and compares favorably to the widely-used
Nystr\"{o}m and density-weighted Nystr\"{o}m methods.
\end{abstract}

\section{Introduction}
\label{submission}
 
Modern problems in machine learning are characterized by large, often
redundant, high-dimensional datasets.
To interpret and more effectively use high-dimensional data, a simplifying
assumption often made is that the data lies on an embedded manifold.
Recovery of the underlying manifold aids certain machine learning problems
such as deriving a classifier from the data, or estimating a function
of interest.
Algorithms that try to recover this underlying structure within the field of
manifold learning include methods such as
% Locally Linear Embedding \cite{Saul},
Laplacian eigenmaps \cite{Belkin} and
diffusion maps \cite{Coifman:2006}.
Many such methods can be thought of as Kernel PCA (KPCA) \cite{Scholkopf1998}
performed on specially constructed kernel matrices \cite{Ham}.  We denote this
class of methods as Kernel Manifold Learning Algorithms.
%While providing good empirical results for problems such as image denoising \cite{Scholkopf1998},
%face recognition \cite{He}, and novelty detection \cite{Hoffman}, the fact
For a dataset with $\nsamp$ points, KMLAs involve the eigendecomposition of
an $\nsamp\times \nsamp$ kernel matrix $K$, and a manifold mapping of order
$\O(\nsamp)$ in cost (for a dataset with $\nsamp$ points), which limits
their usefulness in some application domains (e.g., online learning and
visual tracking).  In addition to the computational cost, storage of the
kernel matrix in memory becomes difficult for larger datasets, particularly
for kernels such as the Gaussian which tends to generate dense matrices.
Therefore a truly scalable KMLA method should be
one that 1) avoids the computation of the full kernel matrix,  2) has low
training cost, and 3) has low testing cost.

Existing methods for speeding up the computation time of KMLAs focus on the
training and testing phases separately. Examples of the former include methods
such as Incomplete Cholesky Decomposition (ICD) \cite{Shawe-Taylor:2004},
the Nystr\"{o}m method \cite{Drineas} and random projections\cite{Achlioptas:2002},
which compute a low rank approximation of the kernel matrix in terms of the
original dataset with $\nsamp$ points and a subset of $\ncent$ points
(see \cite{Zhang} and the references therein).  While exhibiting excellent
performance, ICD, random projections and certain Nystr\"{o}m methods require
the computation of the kernel matrix.
An example of a Nystr\"{o}m method that does not require the computation of
a kernel matrix is one where the centers are chosen uniformly from the data.
While performing well in practice, it suffers from the lack of a principled
way to choose the number of centers. Related work in the class is
\cite{Zhang}, which employs $k$-means clustering and a density-weighted Gram
matrix for performing KPCA.  Drawbacks to the approach include the use of
$k$-means, which also requires the number of clusters in advance and can be
slow in high dimensions (due to its iterative nature); and an asymmetric
weighted Gram matrix.  Further, both methods require the retention of the
full dataset for computing projections; while the training cost may be
lower, the testing cost remains the same.

Methods to reduce the testing cost include reduced set selection and sparse
selection methods, which find a reduced set of expansion vectors from
the original space that approximate well the training set
\cite{Scholkopf:1999,Tipping:2000}, reduced set construction, which
identifies new elements of the input space that approximate well the
training set \cite{Scholkopf:1999}, and kernel
map compression, which uses generalized radial basis function networks to
approximate the kernel map \cite{Arif:2011}. Given that the full
eigendecomposition is typically required, these methods tend to be expensive
in training, but can reduce the testing cost significantly.

{\bf Approach.} To the authors' knowledge, no method exists which considers
speeding up both training and testing of KMLAs in a unified and principled manner.
This paper proposes to do so by connecting kernel principal component analysis
to the eigendecomposition of kernel smoothing operators.  In particular,
% The spectral content of the operators associated to KMLAs is determined by
% the underlying probability density $\idealP(x)$
%  \cite{Williams,Shawe-taylor:2005,Rosasco:2010}.
given a sampled data set $\{x_i\}_1^{\nsamp}$, we show that the spectral
decomposition of the Gram matrix $K$ is related to the kernel density estimate
$\empP(x)$.  If an approximation $\shP(x)$ is available whose cardinality is
much lower than that of $\empP(x)$, an approximation to the original Gram
matrix can be computed at a significantly reduced computational cost, thus
improving the execution of KMLAs.

{\bf Contribution.}
There are two main contributions in the paper.
This paper first exploits the connection of kernel smoothing to the spectral
decomposition of integral operators, within the context of kernel
principal component analysis (KPCA), to define reduced set KPCA
(RSKCPA).  RSKPCA relies on the existence of a reduced set density estimate
(RSDE) of the dataset, with a cardinality of $\ncent$ rather than $\nsamp$
(where $\ncent \ll \nsamp$).
The RSDE defines a weighted $m\times m$ Gram matrix $\shK$, whose
eigendecomposition is computed in lieu of the empirical Gram matrix $\empK$.
The RSKPCA approach circumvents the computation of the full kernel matrix so
that the eigendecomposition is of order $\O({\ncent}^3)$ cost instead
of $\O({\nsamp}^3)$.  Evaluation time is also reduced, as mapping a test
point into the reduced eigenspace requires $O(k\ncent)$ operations rather
than $O(k\nsamp)$, with $k$ retained eigenvectors.

%, which exploits the quantization effect induced in the kernel matrix by the
%kernel.
While many methods can be used to generate the reduced set approximation
$\shP(x)$ to the empirical density $\empP(x)$, efficient methods are
preferred in order to truly impact the overall training time.  This
paper proposes a simple, fast, single-pass method relying on the concept of
the `shadow' of a radially-symmetric kernel to generate the approximation
$\shP(x)$, called the shadow density estimate (ShDE).  The ShDE depends on a
user-tuned parameter $\ell$ to arrive at an RSDE of cardinality
$\ncent \ll \nsamp$ with a run-time cost of $\O({\ncent \nsamp})$.
Unlike previous work where $m$ is chosen arbitrarily, $\ell$ is related to
the kernel, and can generally be set to a generic value (say $\ell = 4$) for
a wide variety of problems.

The shadow algorithm enables the derivation of closed form error bounds of
the RSDE and RSKPCA results.  Results bounding {\bf (1)} the approximation
of the density via the Maximum Mean Discrepancy (MMD), {\bf (2)} the eigenvalue difference
between the operators $\empK$ and $\shK$, and {\bf (3)} the difference in
Hilbert-Schmidt norm between the operators and their eigenspace projections,
provide further theoretical justification for the approach.  The bounds are
given in terms of the user-tuned parameter $\ell$.  The latter two bounds
are shown to be directly related to the first bound, indicating the
importance of the density estimate in generating a correct
eigendecomposition.  The proposed approach performs well as a substitute for
the Nystr\"{o}m family of algorithms.  While the application of choice in this
paper is KMLAs, the method is applicable any problem which satisfies the
assumptions and which can be formulated as a kernel eigenvalue problem.

{\bf Organization.}
Section \ref{sec:kpca} reviews the operator view of KPCA.  Theoretical support
for reduced set KPCA (RSKPCA) follows in Section \ref{sec:rskpca}, which uses
the connection to kernel smoothing to define RSKPCA.  Section
\ref{sec:shadow} defines the shadow of the kernel from which the shadow
density estimate (ShDE) is derived and used in the RSKPCA
algorithm.   Section \ref{sec:bounds} provides error bounds on the MMD distance
between the KDE and the ShDE, and the approximation of the operator by RSKPCA.
Section \ref{sec:experiments} reports experimental results, which show the efficacy of the
method on speeding up KPCA and KPCA-based methods.

\section{KPCA and Eigenfunction Learning}\label{sec:kpca}
This section briefly summarizes the foundations of KPCA as regards the
spectral decomposition of operators. To start,
let $k:\R^d\times\R^d\to\R$ be a bounded, positive-definite kernel function,
defined on the domain $D\subset \R^d$. Then $k$ has the property
$k(x,y) = \l\fmap(x),\fmap(y)\r_{\H}$ where $\H$ is a Reproducing Kernel
Hilbert space and $\fmap:\R^d\to\H$ is an implicit mapping.
The kernel induces a linear operator $\idealK:\La^2(D)\to \La^2(D)$,
%defined by
\begin{align} \eqlabel{opdef1}
  (\idealK f)(x) := \int_D k(x,y)f(y)dy.
\end{align}
To incorporate data arising from a probability density $\idealP(x)$,
\eqref{opdef1} can be modified.  Let $\mu$ be a probability measure on
$D$ associated to $\idealP$, and denote by $\La^2(D,\mu)$ the space of
square integrable functions with norm $\|f\|_p^2
= \l f,f\r_p = \int_D f(x)^2d\mu(x)$.
Define the linear operator  $\idealshK:\La^2(D,\mu)\to \La^2(D,\mu)$ by
\begin{align} \eqlabel{opdef2}
 (\idealshK f)(x) := %\int_D k(x,y)f(y)d\mu(y)\\ =
                 \int_D k(x,y)f(y)\idealP(y)dy.
\end{align}
The operator $\idealshK$ is associated to the eigenproblem
\begin{align} \eqlabel{op2}
 \int_D k(x,y)\idealP(x)\phi_\iota(x)dx = \la_\iota \phi_\iota(y),
\end{align}
where $\phi_\iota(\cdot)$ are the eigenfunctions.
In practice, given a sample set $\sampSet = \sampSetShort$ drawn from
$\idealP(x)$, the empirical approximation to \eqref{op2} is derived from
the approximation
\begin{align} \eqlabel{emp_op1}
 \int_D k(x,y)\idealP(x)\phi_\iota(x)dx \approx \frac{1}{\nsamp}\sum_{i=1}^\nsamp k(x_i,y)\phi_\iota(x_i),
\end{align}
as obtained from the empirical estimate of the probability density
$\idealP(x)$ using $\sampSet$,
\begin{align} \eqlabel{del_p}
 \idealP(x) \approx \frac{1}{\nsamp}\sum_{i=1}^\nsamp \delta(x_i,x),
\end{align}
which employs the sampling property of the delta function.
Equation \eqref{emp_op1} then leads to the eigendecomposition of the Gram
matrix $\empK$
\begin{align} \eqlabel{emp_op2}
 \empK \hat{\phi}_i = \hat{\la}_i \hat{\phi}_i,
  \quad \empK_{ij}:= k(x_i,x_j)\
\end{align}
for $x_i,\  x_j \in \sampSet$, where ($\hat{\la}_i,\ \hat{\phi}_i$) are the
eigenvalue and eigenvector pairs of $K$ in the finite-dimensional
subspace generated by the mapped data points, $x_i \mapsto k(x_i, \cdot)$.
Kernel principal component analysis (KPCA) further scales the eigenvectors
of $\empK$ by their eigenvalues to achieve orthonormality.  As the number of
samples $\nsamp\to\i$, the approximation converges to the true eigenvalues
and eigenfunctions of \eqref{op2} \cite{Williams,Bengio}.

% The approach relies on the eigendecomposition
% of \eqref{op3} and its connection to the density.

\section{Reduced Set KPCA}
\label{sec:rskpca}

This section proposes an alternative formulation of the operator and its
spectral decomposition in order to derive reduced set KPCA, as based on an
approximation to the empirically determined kernel density estimate.
First, note that the integral equation leading to KPCA, Eq. \eqref{opdef2},
implies a kernel \emph{smoothing} of the density (using the operator $\idealK$
applied to $\idealP$),
\begin{equation} \eqlabel{k_smooth}
 (\idealK \idealP)(x) = \int k(x,y)\idealP(y)dy.
\end{equation}
Given a set of samples $\sampSet = \sampSetLong$ drawn from the density
$\idealP$ and using \eqref{del_p}, the smoothed approximation \eqref{k_smooth}
is obtained as
\begin{equation} \eqlabel{k_discrete}
 \empP(x) = (\idealK \idealP)(x)
          \approx \frac{1}{\nsamp}\sum_{i=1}^{\nsamp} k(x_i,x),
\end{equation}
which is known as the kernel density estimate (KDE) \cite{Wasserman}.
The KDE converges to $\idealP(x)$ under some mild assumptions, however using
it can be expensive due to the $\O(\nsamp)$ operations required to compute
$\empP(x)$, thus it is common to utilize a reduced set density estimate
\begin{equation} \eqlabel{approxprob}
 \shP(x) = \frac{1}{\nsamp}\sum_{i=1}^{\ncent} w_i k(c_i, x),
\end{equation}
where $\W = \weightSetLong$, $\shCent = \shCentLong$, and $m<<n$.  The
empirical density generating $\shP$ under the kernel smoother $\idealK$ is
\begin{align} \eqlabel{del_p2}
 \idealP(x) \approx \frac{1}{\ncent}\sum_{i=1}^\ncent w_i \delta(c_i,x).
\end{align}
While having quite different generating approximations, the kernel smoothed
density $\shP$ is close to $\empP$ by construction
\cite{Chen:2010,Freedman,Zhang}.
This paper will replace the KPCA procedure of the eigenproblem derived from
\eqref{emp_op2} and {\eqref{del_p} with one derived from \eqref{approxprob}
and \eqref{del_p2} using an alternative, equivalent formulation of the
continuous eigenproblem \eqref{op2}.  The formulation considers the kernel
\begin{align} \eqlabel{modkernel}
 \tilde{k}(x,y) = p^{1/2}(x)k(x,y)p^{1/2}(y),
\end{align}
which is a \emph{density weighted} version of the original kernel.
%This notion of density weighting corresponds to a larger concentration of
%samples in the regions of the domain where $\idealP(x)$ is high.
The eigenvalues of \eqref{modkernel} are the same as those of
\eqref{op2} \cite{Williams}.  Therefore, the eigenproblem of \eqref{op2} is
the same as the eigenproblem
\begin{align} \eqlabel{op3}
 \int \tilde{k}(x,y)\tilde{\phi}_\iota(x)dx = \la_\iota\tilde{\phi}_\iota(y),
\end{align}
where the relationship between the two eigenvector sets is that
$\tilde{\phi}_\iota(\cdot) = p^{1/2}(\cdot)\phi_\iota(\cdot)$.
Using \eqref{del_p2} and \eqref{modkernel} in \eqref{op3} gives an
eigendecomposition problem with the reduced set Gram matrix
\begin{align} \eqlabel{modkernel_emp}
  \shK \tilde{\phi}_i = \tilde \lambda_i \tilde{\phi}_i,
  \quad \shK_{ij} := \sqrt{w_i}k(c_i,c_j) \sqrt{w_j},
\end{align}
for $c_i,\,c_j \in \shCent$.  The proposed reduced set KPCA procedure
replaces the  Gram matrix $\empK$ in the empirical eigenproblem
\eqref{emp_op2} by a density weighted surrogate
\begin{align*}
 \shK = W\CK W^T,
\end{align*}
where $\CK_{ij} := k(c_i,c_j)$, $W = diag(\sqrt{w_1}, \dots, \sqrt{w_m})$ is
the weight matrix.  The matrix $\shK$ is an empirical, finite-dimensional
approximation to \eqref{modkernel}.  Unlike $\empK$, $\CK$  is an
$\ncent\times\ncent$ matrix (as is $\shK$). Once the centers are selected
and the weights computed using a reduced set density estimation algorithm,
\emph{the original data is discarded}. This makes the algorithm
fundamentally different from  Nystr\"{o}m type methods which retain the
training data for eigenfunction computations at test time, and both the
sparse approximation and the eigenvector approximation methods which need to
first compute the eigendecomposition of a full kernel matrix to generate the
reduced set eigenfunction computations for testing.  The algorithm can be
more aggressive with the training data than either of these two strategies
in pursuit of both training and testing speedups.  The reduced set KPCA
algorithm is summarized in Algorithm \ref{alg:shkpca}.  Since the full
kernel matrix is never computed once an RSDE is available, the training cost
of the algorithm is $\O(m^3)$ and the testing cost is $\O(m)$.

\begin{figure}[t!]
\centering
 \begin{algorithm}[H]
\caption{Reduced Set KPCA}
   \label{alg:shkpca}
\begin{algorithmic}
\begin{footnotesize}
  \STATE Apply a reduced set density estimator to $\sampSet$ to compute  \\
     \quad $\shCent = \{c_1,\dots,c_m\}$ and $w = \{w_1,\dots,w_m\}$.
  \STATE Create diagonal matrix $W = \text{diag}(\sqrt{w_1}, \dots, \sqrt{w_m})$.
  \STATE Compute weighted kernel matrix
       \begin{equation*}
        \shK \in \R^{m\times m}, \quad \shK := W\CK W
       \end{equation*}
       \quad where $\CK_{ij} := k(c_i,c_j)$.
   \STATE Perform eigenvector decomposition
     $\shK\tilde \phi_i = \la_i \tilde \phi_i$
   \STATE Reweight to get the eigenvectors
     $\hat \phi_i = W^{-1/2} \tilde \phi_i$.
\end{footnotesize}
\end{algorithmic}
\end{algorithm}
\end{figure}

The key insight into the procedure is that an accurate reduced set density
estimate must lead to a similarly accurate reduced set KPCA.  This is seen
by noting that the KDE and the RSDE both arise as empirical approximations
to the same continuous eigenproblem.

{\bf Extension to KMLAs.}
More generally, there is a class of manifold learning methods that can be
reformulated as the following generic eigenproblem
\begin{align}
 \label{generic_eigen}
 (\G f)(x) = \int_D g(x,y)k(x,y)f(y)p(y)dy.
\end{align}
If $\G$ is a positive definite operator, it generates an RKHS $\H$.
An equivalent eigenproblem is of the form
\begin{align}
 \label{generic_approx}
 (\shG f)(x)  = \int_D g(x,y)f(y)\tilde{k}(x,y)dy.
\end{align}
Given algorithms where the integral operator is of the form (\ref{generic_approx})
(such as diffusion maps, Laplacian eigenmaps, normalized cut etc), approximation algorithms similar to
Algorithm \ref{alg:shkpca} can be formulated.

\section{A Fast and Simple RSDE}
\label{sec:shadow}

Here, a specific RSDE algorithm for use within RSKPCA, to improve the
execution time of learning and testing versus KPCA, is given.  By proposing a
simple algorithm, closed form approximation errors are computable as
explored in the subsequent section.

While many algorithms have been
designed for reduced set density estimation, to meet our purposes, the
RSDE must satisfy three criteria:
1) it must incorporate the kernel within its estimate;
2) its computational cost cannot be excessive, as that would fail to speed up
the KMLA; and
3) the number of centers $\ncent$ must be identified in a principled way,
since they may vary from problem to problem, and must have deterministic
approximation error.  These three criteria are met by a simple algorithm
exploiting the structure of radially symmetric kernels.
An approach similar
to the one proposed here is found in \cite{Wang}, however their
selection parameter is not fundamentally related to the kernel bandwidth and
they draw no connection to KPCA.

Given a bounded kernel function
$k(\cdot,\cdot)$, where $\kappa$ is the maximum value attained at
$k(c,c),\ \forall c\in\R^d$, and a
sequence $\{y_i\}_{i\in\N}$, if $\|c-y_i\|\to 0$, then
$k(c,y_i)\to \kappa$ (as $i \to \i$). Points sufficiently close to $c$
seem indistinguishable from the perspective of the kernel centered at $c$.
Declare such points near $c$ to lie in the shadow of the kernel function at
$c$.  Given a dataset $\sampSetShort$ used to determine $\empP(x)$, all
points of the dataset in the shadow of another point $c \in \sampSetShort$
can be replaced with $c$ at minor cost.  Removing the now duplicate points
requires an increase in the weight of $c$ by the number of points removed in
the KDE.  Extending this idea further, suppose that there existed a
collection of points from $\sampSetShort$ whose $\shrad$-balls covered the
entire dataset (with $\shrad$ to be defined shortly), then points lying in
these $\shrad$-balls could be removed with minor effect, leading to the
shadow density estimate:
\begin{equation} \eqlabel{shP}
 \shP(x) := \frac{1}{n}\sum_{j=1}^{\ncent}w_j k(c_j,x)
         \approx \frac{1}{n} \sum_{j=1}^{\ncent}\sum_{\xi\in \shSet_j}k(\xi,x)
\end{equation}
where $\shSet_j$ is the set of points lying in the shadow of the
point $c_j$, $w_j = |\shSet_j|$, and $\shSet_i \cap \shSet_j = \emptyset$
when $i \neq j$.  This paper specializes to the case of radially symmetric
kernels with bandwidth parameter $\sigma$, and defines $\shrad$ to be
determined by a parameter $\ell$ via $\shrad(\ell) = \sigma/\ell$.
What remains is to provide a selection procedure for the shadow centers $c_j$.
Algorithm \ref{alg:shadow} provides a single-pass $\O(mn)$ complexity
approach\footnote{The parameter $\ell$ implicitly determines the number
$\ncent$.}.
Figure \ref{shVis} conceptually depicts the process of moving from data to
shadow centers, and also the reconstruction of the KDE using a ShKDE.  The
color coding depicts the distinct shadow sets.  Based on \S\ref{sec:kpca},
the RSKPCA procedure follows as in Algorithm \ref{alg:shkpca}.  The next
section utilizes $\epsilon(\ell)$ to analyze the effectiveness of the ShDE
approximation and the fidelity of RSKPCA. The experiments section discusses
other RSDEs, and compares ShDE to them in the context of RSKPCA.

\begin{figure}[t!]
\centering
\begin{algorithm}[H]
   \caption{Shadow Set Selection Procedure}
   \label{alg:shadow}
\begin{algorithmic}
\begin{footnotesize}
   \STATE {\bfseries Input:} $\sampSet = \{x_i\}_{i=1}^{\nsamp}$, bandwidth $\s$, and
     $\ell\in\R_+$.
   \STATE Set $\shCent = \emptyset$, $\weightSet = \emptyset$, $\ncent = 0$, and
            \begin{equation} \label{shadow}
             \shrad = \s/\ell.
            \end{equation}
   \WHILE{$\sampSet \neq \emptyset$}
	 \STATE Let $c$ be first element of $\sampSet$.
     \STATE Find shadow set
	        $\shSet = \{ y \in \sampSet \ :\  \|y - c\| < \shrad\}$.
	 \STATE Update center set $\shCent = \shCent \cup \{c\}$.
	 \STATE Update weight set $\weightSet = \weightSet \cup \{\card{\shSet}\}$.
	 \STATE Set $\sampSet = \sampSet \backslash \shSet$.
   \ENDWHILE
\end{footnotesize}
\end{algorithmic}
\end{algorithm}
%\begin{wrapfigure}{r}{0.43\textwidth}
 \centering
  \includegraphics[width=0.42\textwidth,clip=true,trim=0.5in 0.05in 0.25in 0in]{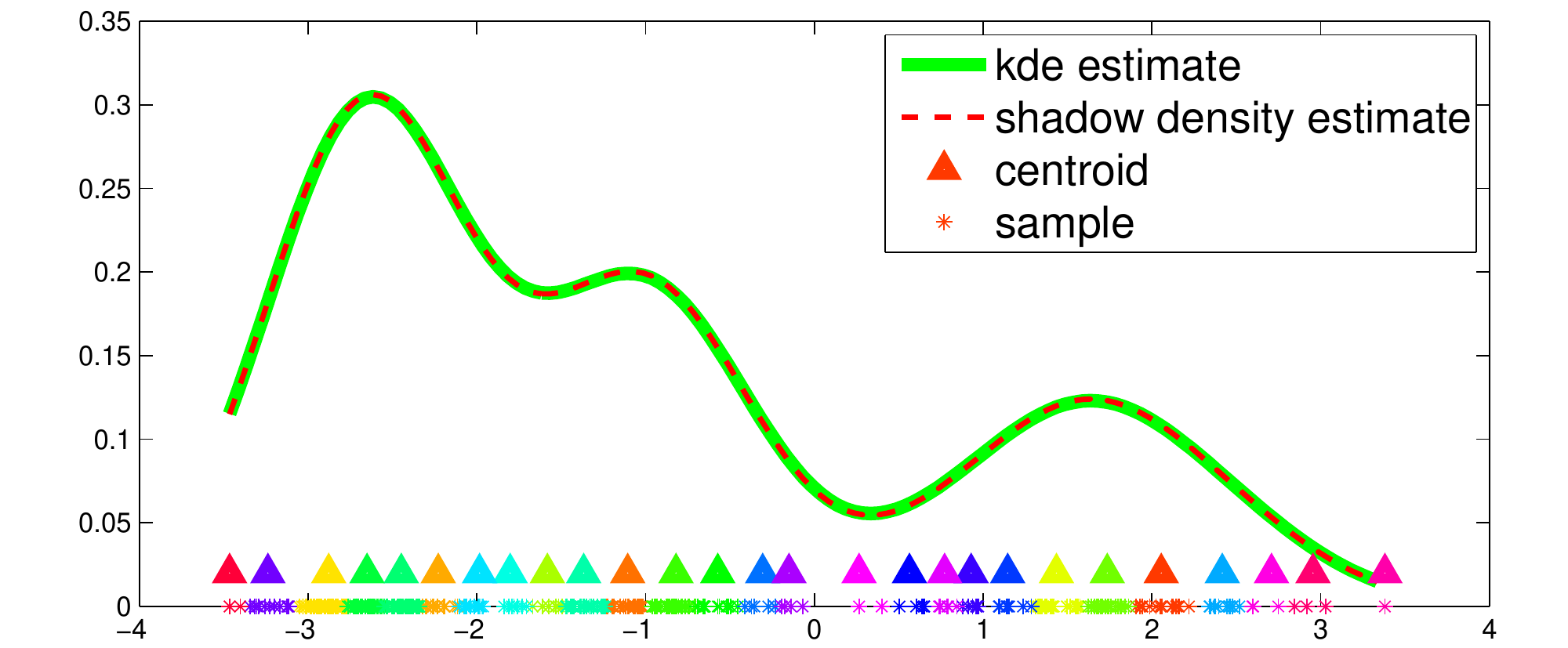}
  \caption{Visualization of the data, the shadow centers, and the associated
  KDE and ShKDE. \label{shVis}}
%\end{wrapfigure}
\end{figure}

\section{Analysis of Approximation Error}
\label{sec:bounds}
This section derives bounds on the MMD error for shadow densities,
plus bounds on the difference between the eigenvalues and spectral projections
of the operators associated to the original kernel matrix generated by KPCA,
$\empK$, and the one generated by the shadow density, $\shK$.
The bounds demonstrate the claim that an accurate RSDE leads to an accurate
eigendecomposition, since the bounds on the approximation error of the
eigendecomposition are given in terms of the error of the approximated
density estimate.

Consider a set of points $\sampSet = \sampSetLong$, sampled from the
distribution $\idealP$.  Let the shadow centers be given by $\shCent =
\shCentLong$, and define the data-to-center mapping
$\alpha:\{1,\dots,\nsamp\}\to\{1,\dots,\ncent\}$.
The shadow quantized dataset generated from $\sampSet$ is given by \
$\quaC = \{c_{\al(1)},\dots,c_{\al(\nsamp)}\}$.

Here, as in \cite{Zhang}, kernels that satisfy the following inequality are
considered,
\begin{align}\label{eq:profile}
 (k(a,b)-k(c,d))^2 \leq C_\sampSet^k(\|a-b\|^2+\|b-d\|^2),
\end{align}
where $C_\sampSet^k$ is a constant depending on $k$ and the sample
set $\sampSet$, and that can be written as
\begin{equation} \label{eq:profile2}
 k(x,y) = \vp{x-y}{\s}
\end{equation}
The Laplacian and Gaussian, in particular, satisfy \eqref{profile}
and \eqref{profile2} for $\v(s) = e^{-s}$. The constant $C_\sampSet^k$ is
$\frac{1}{\s^2}$
for the Laplacian, and is $\frac{1}{2\s^2}$ for the Gaussian \cite{Zhang08}.

The maximum mean discrepancy (MMD) is a distance measure between probability
distributions in the Hilbert space $\H$ induced by the kernel $k$ \cite{Smola}.
The (biased) MMD is defined to be
\begin{equation}\label{eq:mmd_def}
 \MMD(\sampSet,\sampSetOther)_b^2
   := \left\|\sum_{i=1}^{\nsamp}\frac{1}{\nsamp}\psi(x_i)
             - \sum_{i=1}^{\nsamp}\frac{1}{\nsamp}\psi(y_i)\right\|^2_{\H},
\end{equation}
where the $b$ denotes bias and $\fmap$ is the mapping from the input space
$\R^d$ to $\H$, $\fmap(x) := k(x, \cdot)$. The points $x_i \in \sampSet$ and
$y_i \in \sampSetOther$ are generated by probability distributions $p$ and
$q$ respectively; both sets have the same number of elements.  The MMD can be
thought of as the squared $L^2$ distance between two KDEs of the form
\eqref{k_discrete} (up to scaling factors induced by $\H$) \cite{Smola}.
Since the kernel in KPCA induces a smoothing effect on the samples from the
true probability density $\idealP$, a small value for the MMD between the
KDE and an RSDE is indicative of the RSDE acting as an effective surrogate
for $\idealP$ in the KPCA space, thus generating an effective approximation
to \eqref{opdef2} via the use of Algorithm \ref{alg:shkpca}.

The theorem below bounds the difference in MMD between
the KDE $\empP(x)$ and the ShDE $\shP(x)$.
\begin{theorem} \label{prop1a}
  \textbf{(MMD Worst Case Bound)}
  Let $\nsamp$ be the number of samples, $\sampSet$ be defined as above,
  $\ \quaC \ $ be the quantized dataset, and let $k$ satisfy \eqref{profile2}. Then
  \begin{equation} \label{mmd_result1}
    \MMD(\sampSet, \ \quaC \ )_b \leq \sqrt{2\left(\kappa - \v\left(\frac{1}{\ell^p}\right)\right)}.
  \end{equation}
\end{theorem}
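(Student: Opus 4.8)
The plan is to bound $\MMD(\sampSet,\quaC)_b$, which is the $\H$-norm of an average of feature-vector differences, by a uniform per-sample estimate, and then to control each such term using the defining property of the shadow sets in Algorithm \ref{alg:shadow}.

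First I would unpack \eqref{mmd_def} with $\sampSetOther=\quaC$. Writing $\psi(x)=k(x,\cdot)$ and using $\langle\psi(x),\psi(y)\rangle_{\H}=k(x,y)$, the triangle inequality in $\H$ together with the elementary fact that an average is at most its maximum gives
\begin{equation*}
 \MMD(\sampSet,\quaC)_b = \left\| \frac{1}{\nsamp}\sum_{i=1}^{\nsamp}\bigl(\psi(x_i)-\psi(c_{\al(i)})\bigr) \right\|_{\H}
  \;\leq\; \frac{1}{\nsamp}\sum_{i=1}^{\nsamp}\bigl\|\psi(x_i)-\psi(c_{\al(i)})\bigr\|_{\H}
  \;\leq\; \max_{1\le i\le \nsamp}\bigl\|\psi(x_i)-\psi(c_{\al(i)})\bigr\|_{\H}.
\end{equation*}
So it suffices to bound a single term $\|\psi(x_i)-\psi(c_{\al(i)})\|_{\H}$ uniformly in $i$.

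Next I would evaluate that term. Since $k$ is radially symmetric with $k(c,c)=\kappa$ for every $c\in\R^d$, expanding the squared norm yields
\begin{equation*}
 \bigl\|\psi(x_i)-\psi(c_{\al(i)})\bigr\|_{\H}^2
  = k(x_i,x_i)-2k(x_i,c_{\al(i)})+k(c_{\al(i)},c_{\al(i)})
  = 2\bigl(\kappa-k(x_i,c_{\al(i)})\bigr).
\end{equation*}
By the construction in Algorithm \ref{alg:shadow}, $x_i$ lies in the shadow set of its assigned center $c_{\al(i)}$, so $\|x_i-c_{\al(i)}\|<\shrad=\s/\ell$, hence $\|x_i-c_{\al(i)}\|^p/\s^p<1/\ell^p$. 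Using \eqref{profile2} and the fact that the profile $\v$ is non-increasing,
\begin{equation*}
 k(x_i,c_{\al(i)}) = \v\!\left(\frac{\|x_i-c_{\al(i)}\|^p}{\s^p}\right) \;\geq\; \v\!\left(\frac{1}{\ell^p}\right),
\end{equation*}
so that $\|\psi(x_i)-\psi(c_{\al(i)})\|_{\H}^2 \le 2\bigl(\kappa-\v(1/\ell^p)\bigr)$. Taking the maximum over $i$ and substituting into the chain of inequalities above establishes the stated bound.

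The only real subtlety is the monotonicity of the profile $\v$: the argument requires $\v$ to be non-increasing so that the radius bound $\|x_i-c_{\al(i)}\|<\s/\ell$ converts into a lower bound on $k(x_i,c_{\al(i)})$. This holds for the Laplacian and Gaussian with $\v(s)=e^{-s}$, and more generally for the radially symmetric kernels under consideration, and should be flagged as a standing hypothesis. Beyond that, the proof is a routine expansion of the RKHS inner product combined with the shadow-set membership condition, so I do not anticipate a serious obstacle.
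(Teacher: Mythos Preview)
Your proposal is correct and follows essentially the same route as the paper's (very terse) proof: write the MMD as $\bigl\|\tfrac{1}{n}\sum_i(\psi(x_i)-\psi(c_{\alpha(i)}))\bigr\|_{\H}$, bound by the worst single residual, expand $\|\psi(x_i)-\psi(c_{\alpha(i)})\|_{\H}^2=2(\kappa-k(x_i,c_{\alpha(i)}))$, and use the shadow radius $\|x_i-c_{\alpha(i)}\|<\sigma/\ell$ together with \eqref{profile2}. The paper's only additional remark is the identity $\sum_{j} w_j\,\psi(c_j)=\sum_i \psi(c_{\alpha(i)})$, which is just the observation that the quantized set $\quaC$ has cardinality $n$; since you work directly with $\quaC$ this is already built into your first display, and your explicit flagging of the monotonicity of $\varphi$ is a useful clarification that the paper leaves implicit.
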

\begin{proof}
 Follows from  \eqref{profile2} and \eqref{mmd_def} through the identity
 $ \sum_{c_i \in \shCent} w_i \psi(c_i) = \sum_{x_i \in \sampSet}
   \psi(c_{\alpha(i)}),$ which gives the ShDE and the KDE the same
   cardinality, $\nsamp$.
\end{proof}
The ShDE+RSKPCA procedure creates a matrix $\shK$ that acts as an $m \times m$
surrogate for the quantized kernel matrix
$\quaK_{ij} = k(\cq{i},\cq{j}),\ \text{for}\ i,j = 1 \dots \nsamp$.
Exploiting the quantization effect, the following theorems bound the
eigenvalue difference between the two spectral decompositions and also
the difference between the operators in $\H$ in terms of the Hilbert-Schmidt
norm.
\begin{theorem} \label{shadowDiffGeneric}
 Let $k$ be such that \eqref{profile} holds, and let $\la_i$ and
 $\bar{\la}_i$ be the eigenvalues of the normalized matrices $K$ and $\quaK$
 respectively. Then
 \begin{align*}
  \sum_{i=1}^{\nsamp} (\la_i - \bar{\la}_i)^2
    \leq 2C_\sampSet^k\left(\frac{\s}{\ell}\right)^2.
 \end{align*}
\end{theorem}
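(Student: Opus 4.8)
The plan is to turn the spectral comparison into a matrix perturbation estimate and then bound the perturbation entrywise. Both $K$ and $\quaK$ are real symmetric $\nsamp\times\nsamp$ matrices, and by ``normalized'' I read the $1/\nsamp$ scaling inherited from the empirical operator \eqref{emp_op1}, so that $\la_i$ and $\bar\la_i$ are the (decreasingly ordered) eigenvalues of $\tfrac1\nsamp K$ and $\tfrac1\nsamp\quaK$. The Hoffman--Wielandt inequality for Hermitian matrices then gives
\begin{equation*}
 \sum_{i=1}^{\nsamp}(\la_i-\bar\la_i)^2 \;\le\; \Bigl\|\tfrac1\nsamp K-\tfrac1\nsamp\quaK\Bigr\|_F^2 \;=\; \frac{1}{\nsamp^2}\sum_{i,j=1}^{\nsamp}\bigl(k(x_i,x_j)-k(\cq{i},\cq{j})\bigr)^2 ,
\end{equation*}
so it suffices to bound the right-hand side.

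Next I would bound each entry. Applying the regularity hypothesis \eqref{profile}, in the form needed here, with $a=x_i$, $b=x_j$ and $c=\cq{i}$, $d=\cq{j}$ (i.e.\ comparing the kernel on a pair of samples with the kernel on the corresponding pair of centers) yields
\begin{equation*}
 \bigl(k(x_i,x_j)-k(\cq{i},\cq{j})\bigr)^2 \;\le\; C_\sampSet^k\bigl(\|x_i-\cq{i}\|^2+\|x_j-\cq{j}\|^2\bigr).
\end{equation*}
The defining property of Algorithm \ref{alg:shadow} is that every sample lies in the shadow set of its assigned center, i.e.\ $\|x_i-\cq{i}\|<\shrad=\s/\ell$ for all $i$, and likewise for $j$. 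Hence each of the $\nsamp^2$ summands is at most $2C_\sampSet^k(\s/\ell)^2$; summing and dividing by $\nsamp^2$ cancels the $\nsamp^2$ factor and delivers $\sum_i(\la_i-\bar\la_i)^2\le 2C_\sampSet^k(\s/\ell)^2$, as claimed.

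I expect the only real care points to be bookkeeping rather than analysis. First, Hoffman--Wielandt requires the two eigenvalue sequences to be matched in the same sorted order, which is the sense in which the statement must be read; since $K$ and $\quaK$ need not commute, no finer pairing of eigenvalues is available or needed. Second, one must apply \eqref{profile} with the arguments paired so that the bounding distances are exactly the sample-to-center displacements controlled by the shadow radius; for the Laplacian and Gaussian (with $\v(s)=e^{-s}$) this is valid with the constants $C_\sampSet^k = 1/\s^2$ and $1/(2\s^2)$ recorded just after \eqref{profile2}. A close variant of this argument, with the Frobenius norm replaced by the Hilbert--Schmidt norm of the associated operators, is what I would use for the companion operator-error bound.
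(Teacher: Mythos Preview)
Your proposal is correct and follows exactly the route the paper indicates: Hoffman--Wielandt to pass from eigenvalue discrepancy to the Frobenius norm of $\tfrac1\nsamp(K-\quaK)$, then the entrywise Lipschitz-type estimate \eqref{profile} together with the shadow radius $\|x_i-\cq{i}\|<\s/\ell$ to finish. The paper's own proof is only the one-line sketch ``Follows from the Hoffman--Wielandt inequality and \eqref{profile}''; you have supplied precisely the details that sketch calls for, including the reading of ``normalized'' as the $1/\nsamp$ scaling and the correct application of \eqref{profile} so that the bounding distances are the sample-to-center displacements (the displayed \eqref{profile} has a typographical slip in its arguments; the intended form, from \cite{Zhang08}, is the one you use).
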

\begin{proof}
 Follows from the Hoffman-Wielandt inequality and \eqref{profile}.
\end{proof}
Given a kernel function $k$ and $\X$, a finite dimensional operator
$K_{\nsamp}:\H\to\H$ approximating the ideal operator \eqref{opdef2}
can be defined via
\begin{align} \eqlabel{extrap_op}
 K_{\nsamp}(\cdot) := \frac{1}{\nsamp}\sum_{i=1}^{\nsamp}\l\cdot,k_{x_i}\r_{\H} k_{x_i},
\end{align}
where $x_i\in\X$ and $\l\cdot,k_{x_i}\r_{\H}$ projects the point onto the kernel function
$k_{x_i} := k(\cdot,x_i) \in\H$ \cite{Rosasco:2010}. The operator can be used
to bound the error in Hilbert-Schmidt norm between the empirical operators
generated by KPCA and ShDE+RSKPCA.
\begin{theorem} \label{prop_hschmidt}
 Let $K_{\nsamp}$ and $\ \quaK_{\nsamp}$ be defined using \eqref{extrap_op}
 with $\sampSet$ and $\ \quaC$, respectively.
 Then
 \begin{align} \label{eq:hs_result1}
  \|K_{\nsamp} - \quaK_{\nsamp}\|_{\HS} \leq 2\kappa\sqrt{2\left(\kappa - \v\left(\frac{1}{\ell^p}\right)\right)}.
 \end{align}
\end{theorem}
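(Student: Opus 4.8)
The plan is to write $K_{\nsamp}-\quaK_{\nsamp}$ as an average of rank-one operator differences, apply the triangle inequality for the Hilbert--Schmidt norm, and control each term with a telescoping identity together with the shadow property of the quantization. For $u,v\in\H$ let $u\otimes v$ denote the rank-one operator $f\mapsto\l f,v\r_\H\, u$, so that $\|u\otimes v\|_{\HS}=\|u\|_\H\|v\|_\H$; then by \eqref{extrap_op},
\begin{align*}
 K_{\nsamp}-\quaK_{\nsamp}
   = \frac{1}{\nsamp}\sum_{i=1}^{\nsamp}\bigl(k_{x_i}\otimes k_{x_i}-k_{\cq{i}}\otimes k_{\cq{i}}\bigr).
\end{align*}

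For each $i$, writing $a=k_{x_i}$, $b=k_{\cq{i}}$ and using the identity $a\otimes a-b\otimes b=a\otimes(a-b)+(a-b)\otimes b$,
\begin{align*}
 \|k_{x_i}\otimes k_{x_i}-k_{\cq{i}}\otimes k_{\cq{i}}\|_{\HS}
   \leq \bigl(\|k_{x_i}\|_\H+\|k_{\cq{i}}\|_\H\bigr)\,\|k_{x_i}-k_{\cq{i}}\|_\H .
\end{align*}
The first factor is at most $2\sqrt{\kappa}$ since $\|k_x\|_\H^2=k(x,x)\leq\kappa$ (this equals $2\kappa$ for the normalized kernels considered, where $\kappa=\v(0)=1$). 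For the second factor, expand $\|k_{x_i}-k_{\cq{i}}\|_\H^2=k(x_i,x_i)-2k(x_i,\cq{i})+k(\cq{i},\cq{i})=2\kappa-2k(x_i,\cq{i})$; and since $x_i$ lies in the shadow of $\cq{i}$ (Algorithm \ref{alg:shadow}), $\|x_i-\cq{i}\|<\shrad=\s/\ell$, so by \eqref{profile2} and monotonicity of $\v$, $k(x_i,\cq{i})=\v\bigl(\|x_i-\cq{i}\|^p/\s^p\bigr)\geq\v\bigl(1/\ell^p\bigr)$, hence $\|k_{x_i}-k_{\cq{i}}\|_\H\leq\sqrt{2(\kappa-\v(1/\ell^p))}$.

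Combining, every summand is bounded uniformly in $i$ by $2\kappa\sqrt{2(\kappa-\v(1/\ell^p))}$, and since the Hilbert--Schmidt norm of an average is at most the average of the norms, \eqref{hs_result1} follows. The only point requiring care is the passage from the geometric shadow condition $\|x_i-\cq{i}\|<\s/\ell$ to the kernel-value lower bound $k(x_i,\cq{i})\geq\v(1/\ell^p)$, which relies on $\v$ being non-increasing; this holds for the kernels satisfying \eqref{profile} and \eqref{profile2}, e.g. $\v(s)=e^{-s}$ for the Gaussian and Laplacian. As anticipated, the resulting bound is a constant multiple of the MMD bound \eqref{mmd_result1} of Theorem \ref{prop1a}, which is precisely the advertised link between the fidelity of the density estimate and the fidelity of the operator approximation.
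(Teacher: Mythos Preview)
Your proof is correct and follows essentially the same route as the paper: both decompose $K_{\nsamp}-\quaK_{\nsamp}$ via the telescoping identity $a\otimes a-b\otimes b=a\otimes(a-b)+(a-b)\otimes b$ (the paper writes the same thing as $\l\cdot,k_{x_i}\r_\H\,\e_i+\l\cdot,\e_i\r_\H\,k_{\cq{i}}$ with $\e_i=k_{x_i}-k_{\cq{i}}$), apply the triangle inequality for the Hilbert--Schmidt norm, and bound the residual $\|\e_i\|_\H$ uniformly via the shadow radius. Your write-up simply makes explicit the steps the paper leaves terse, and your observation that the natural constant is $2\sqrt{\kappa}$ rather than $2\kappa$ (the two coincide for the normalized kernels in question, where $\kappa=\v(0)=1$) is correct.
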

\begin{proof}
 Define the operators $K_{\nsamp}$ and $ \ \quaK_{\nsamp} $ via the extrapolation \eqref{extrap_op} using
 $k_{x_i}$ and $k_{\cq{i}}$ respectively, and define the kernel residual in $\H$ to be $\e_i := k_{x_i} - k_{\cq{i}}$.
 Then
 \begin{align*}
  K_{\nsamp} - \quaK_{\nsamp} &= \frac{1}{\nsamp}\sum_{i=1}^{\nsamp}\left( \l\cdot,k_{x_i}\r_{\H} \e_i + \l\cdot,\e_i\r_{\H} k_{\cq{i}} \right),
 \end{align*}
leading to
 \begin{align*}
\left\|K_{\nsamp} - \quaK_{\nsamp}\right\|_{\HS}
            &\leq
			\left\|\frac{1}{\nsamp}\sum_{i=1}^{\nsamp}\l\cdot,k_{x_i}\r_{\H}
			\e_i\right\|_{\HS} 
                + \left\|\frac{1}{\nsamp}\sum_{i=1}^{\nsamp}\l\cdot,\e_i\r_{\H} k_{\cq{i}} \right\|_{\HS}.
 \end{align*}
 Using the properties of the Hilbert-Schmidt norm, and the maximizer $\e'$
 such that the centroid error $\|\e_i\|_{\H}$ is largest, the theorem follows.
\end{proof}
Proposition \ref{prop_hschmidt} shows that the centroid error in $\H$ is the
key to the performance of the learning algorithm, and that the error
is controlled solely in terms of the parameter $\ell$. The independence of
the performance from the weights shows that ShDE effectively learns the
percentage of the data that needs to be retained based on the value of $\ell$,
which is dependent on the kernel and not the data.  Finally, $\ell$ controls
both the MMD and operator approximations, implying that the density
estimate used in the shadow density procedure is sensible for learning in
the eigenspace. Using this result, the following theorem follows.
\begin{theorem} \label{prop_eigenspace}
 Let $K_{\nsamp}$ and $\quaK_{\nsamp}$ be symmetric positive (finite) Hilbert-Schmidt
 operator of $\H$ defined by \eqref{extrap_op}, and
 assume that $K_{\nsamp}$ has simple nonzero eigenvalues $\la_1 > \la_2 > \dots >
 \la_n$. Let $D > 0$ be an integer such that $\la_D > 0, \ \delta_D = \frac{1}{2}\left(\la_D-\la_{D+1}\right)$.
 If $2\sqrt{\kappa}\|\e'\|_{\H} < \delta_D/2$, then
\begin{align}
 \|P^D(K_{\nsamp})-P^D(\quaK_{\nsamp})\|_{\HS} \leq \frac{2\sqrt{2\kappa\left(\kappa - \v\left(\frac{1}{\ell^p}\right)\right)}}{\delta_D},
\end{align}
where $P^D(A)$ denotes the projection onto the $D$-dimensional eigenspace of $A\in\HS(\H)$ associated to the largest eigenvalues.
\end{theorem}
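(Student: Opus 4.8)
The plan is to reduce the statement to a single standard fact from operator perturbation theory and then feed it the Hilbert--Schmidt estimate already available in Theorem~\ref{prop_hschmidt}. The fact I would invoke --- in the form used in \cite{Rosasco:2010}, and ultimately a consequence of the Riesz functional calculus --- is the following: if $A$ is a positive Hilbert--Schmidt operator on $\H$ with simple nonzero eigenvalues $\la_1 > \la_2 > \cdots$, if $\delta_D := \frac{1}{2}(\la_D - \la_{D+1}) > 0$, and if $B$ is a positive Hilbert--Schmidt operator with $\|A-B\|_{\HS} < \delta_D/2$, then the projections $P^D(\cdot)$ onto the top-$D$ eigenspaces satisfy $\|P^D(A) - P^D(B)\|_{\HS} \le \|A-B\|_{\HS}/\delta_D$.

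For completeness I would sketch why this holds: choose a closed contour $\Gamma$ in $\C$ (say a circle) enclosing exactly $\la_1,\dots,\la_D$ and lying at distance $\delta_D$ from the spectrum of $A$, so that $P^D(A) = \frac{1}{2\pi i}\oint_\Gamma (zI - A)^{-1}\,dz$. The hypothesis $\|A-B\|_{\HS} < \delta_D/2$ ensures, via Weyl's inequality $|\la_k(A)-\la_k(B)|\le\|A-B\|_{\HS}$, that no eigenvalue of $B$ crosses $\Gamma$, so the same contour represents $P^D(B)$. Subtracting and using the resolvent identity $(zI-A)^{-1} - (zI-B)^{-1} = (zI-A)^{-1}(A-B)(zI-B)^{-1}$, then bounding the $\HS$ norm by $\frac{1}{2\pi}\,\mathrm{length}(\Gamma)\cdot\sup_{z\in\Gamma}\|(zI-A)^{-1}\|\cdot\|A-B\|_{\HS}\cdot\sup_{z\in\Gamma}\|(zI-B)^{-1}\|$, and using that the resolvent norms on $\Gamma$ are controlled by $1/\delta_D$, gives the claimed bound (the constant becomes exactly $1/\delta_D$ after optimizing the contour, as in \cite{Rosasco:2010}).

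The remaining step is verification of hypotheses and substitution. The operator $\quaK_\nsamp = \frac{1}{\nsamp}\sum_i \l\cdot,k_{\cq{i}}\r_\H k_{\cq{i}}$ is a finite sum of rank-one positive operators, hence positive and Hilbert--Schmidt; $K_\nsamp$ is assumed to have simple nonzero eigenvalues with $\la_D > 0$, so $\delta_D > 0$. From the estimate established in the proof of Theorem~\ref{prop_hschmidt}, $\|K_\nsamp - \quaK_\nsamp\|_{\HS} \le 2\sqrt{\kappa}\,\|\e'\|_\H$, where $\|\e'\|_\H = \max_i\|k_{x_i} - k_{\cq{i}}\|_\H$; since $\|x_i - \cq{i}\| < \s/\ell$ by the shadow construction and $\v$ is decreasing, $\|k_{x_i} - k_{\cq{i}}\|_\H^2 = 2\kappa - 2k(x_i,\cq{i}) \le 2\big(\kappa - \v(1/\ell^p)\big)$, so $\|\e'\|_\H \le \sqrt{2(\kappa - \v(1/\ell^p))}$ and thus $\|K_\nsamp - \quaK_\nsamp\|_{\HS} \le 2\sqrt{2\kappa(\kappa - \v(1/\ell^p))}$. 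Consequently the assumed condition $2\sqrt{\kappa}\,\|\e'\|_\H < \delta_D/2$ is precisely the gap-smallness hypothesis $\|K_\nsamp - \quaK_\nsamp\|_{\HS} < \delta_D/2$ needed above, and applying the perturbation fact with $A = K_\nsamp$ and $B = \quaK_\nsamp$ gives
\[
  \|P^D(K_\nsamp) - P^D(\quaK_\nsamp)\|_{\HS} \;\le\; \frac{\|K_\nsamp - \quaK_\nsamp\|_{\HS}}{\delta_D} \;\le\; \frac{2\sqrt{2\kappa\big(\kappa - \v(1/\ell^p)\big)}}{\delta_D}.
\]

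The main obstacle is entirely in the perturbation fact: obtaining the clean constant $1/\delta_D$ and, more importantly, arguing that the $\delta_D/2$ smallness condition genuinely keeps the top-$D$ eigenvalues of $B$ separated from the rest, so that a single Riesz contour remains valid for both $A$ and $B$ simultaneously. Everything downstream --- positivity and $\HS$-membership of $\quaK_\nsamp$, the centroid bound $\|\e'\|_\H \le \sqrt{2(\kappa - \v(1/\ell^p))}$ (which uses only monotonicity of $\v$, valid for $\v(s)=e^{-s}$), and the final substitution --- is routine.
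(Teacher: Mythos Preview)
Your proposal is correct and follows essentially the same approach as the paper: the paper's proof is a one-liner invoking Theorem~3 of Zwald--Blanchard \cite{Zwald:2005} (the eigenspace perturbation bound $\|P^D(A)-P^D(B)\|_{\HS}\le\|A-B\|_{\HS}/\delta_D$ under the gap condition) together with Proposition~\ref{prop_hschmidt}. Your version cites the same fact via \cite{Rosasco:2010} rather than \cite{Zwald:2005}, and you additionally sketch the Riesz-contour argument and unpack the centroid estimate $\|\e'\|_\H\le\sqrt{2(\kappa-\v(1/\ell^p))}$ explicitly, but the logical structure is identical.
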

\begin{proof}
 Follows from Theorem 3 in \cite{Zwald:2005} and Proposition
 \ref{prop_hschmidt}.
\end{proof}

\section{Experimental Results}
\label{sec:experiments}

This section demonstrates the effectiveness of RSKPCA on real-world data.
Approximation accuracy tests include eigenembedding and classification tasks
with the Gaussian kernel.  The datasets used and the bandwidths chosen
(via cross-validation) are given in Table \ref{table1}.
In the figures, $\ntrain$ refers to the
number of the points the model is trained on.  All of the comparison
algorithms require specification of the reduced set size $\ncent$.  To
compare, the shadow method is run with $\ell$ then the average of all
$\ncent$ achieved on the datasets determines the value $\ncent$ for the
other methods.  Table \ref{table2} compares the training time and storage
size (which relates to evaluation time).  All comparisons are made with KPCA
as the baseline. Speedup is relative to the equivalent KPCA execution time.

\vspace*{-0.075in}
\paragraph{Eigenembedding comparison with Nystr\"{o}m methods.}
This experiment demonstrates the fidelity of the eigenfunctions computed by
ShDE+RSKPCA to those generated by KPCA.  The capacity of
generalization of the approximate eigenfunctions is tested. Using KPCA as
the baseline, ShDE+RSKPCA is compared with three other methods:
1) subsampled KPCA with bases chosen via random uniform sampling,
2) the regular Nystr\"{o}m method with bases chosen via random uniform
sampling, and
3) the density weighted Nystr\"{o}m (WNystr\"{o}m) method \cite{Zhang}.
The experimental methodology is as follows. First, the KPCA model is trained
on the entire dataset. Then, shadow, uniform,  Nystr\"{o}m, and WNystr\"{o}m
KPCA models are trained using $80\%$ of the data for $\ell\in[3.0,5.0]$, in
increments of $0.1$. The reason $\ell = 3.0$ is chosen as a lower bound for the
Gaussian is because lower values of $\ell$ pick points that are no longer
similar to the centroid, while $\ell > 5$ generally results in a loss in training efficiency.
  The KPCA eigenfunction embedding is computed for the
remaining $20\%$ of the data for all the models, with rank $\neigs=5$.  The
embeddings are aligned with each other using the transform
${\rm argmin}_{A\in\R^{\neigs\times\neigs}}\|O-\tilde{O}A\|_F$, where $O$ is
the matrix representing the KPCA embedding, and $\tilde{O}$ represents the
approximate KPCA embedding. The Frobenius norm difference of the embeddings
and eigenvalues, the training and testing speedup, and the amount of data
retained are averaged over 50 runs for each $\ell$, and are shown in Figures
\ref{fig_ny_german} and \ref{fig_ny_pendigits} for the \textbf{german} and
\textbf{pendigits} datasets.  As expected, while subsampled KPCA is faster
in the training stage, it performs worse than any other method, implying
that an appropriate weighting is necessary to approximate the eigenfunctions
of KPCA. For larger values of $\ell$, ShDE+RSKPCA always performs well when
it comes to approximating the eigenvalues and eigenfunctions of the
operator.  In terms of eigenembedding accuracy, using ANOVA with a value of
$\al = 0.05$, ShDE+RSKPCA is better than the Nystr\"{o}m embeddings after
$\ell = 3.2 (3.3)$ and no worse than the WNystr\"{o}m embeddings after
$\ell = 4.0 (4.8)$ for \textbf{pendigits} (\textbf{german}), and asymptotically
approaches the KPCA baseline.
While slower than the Nystr\"{o}m method for training, ShDE+RSKPCA is faster
than KPCA for training and achieves significant testing speedups.
It does so by retaining a subset of the data via selection of $\ell$, c.f.
Fig. \ref{retained}(a,b).

\begin{figure}[t]
\centering
\subfigure[Eigenvalue Deviation]{
  \includegraphics[scale=0.185,clip=true,trim=0.1in 0.05in 0.5in  0.4in]{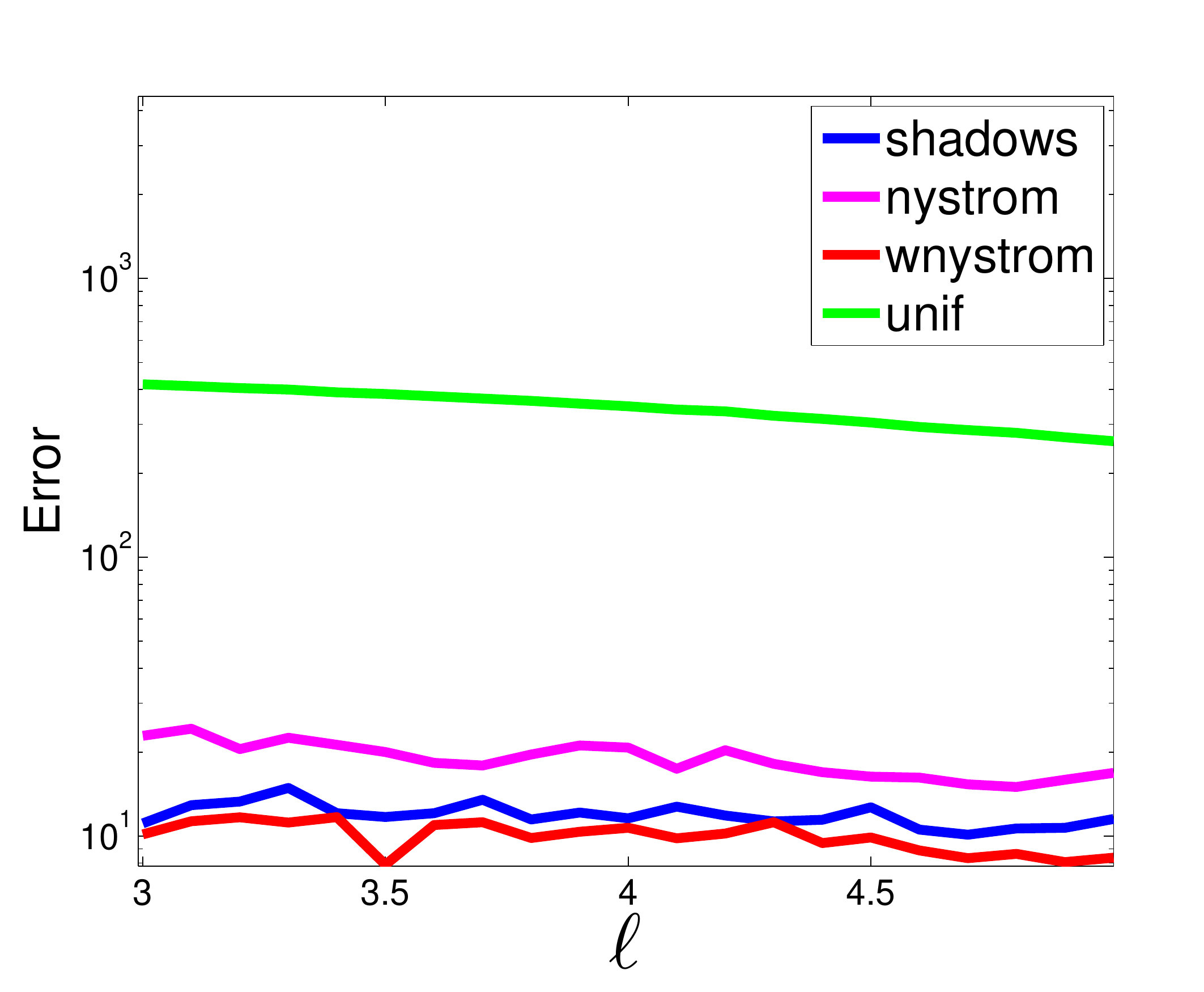}
  %\label{fig1:subfig1}
}
\subfigure[Embedding Accuracy]{
  \includegraphics[scale=0.185,clip=true,trim=0.1in 0.05in 0.5in  0.4in]{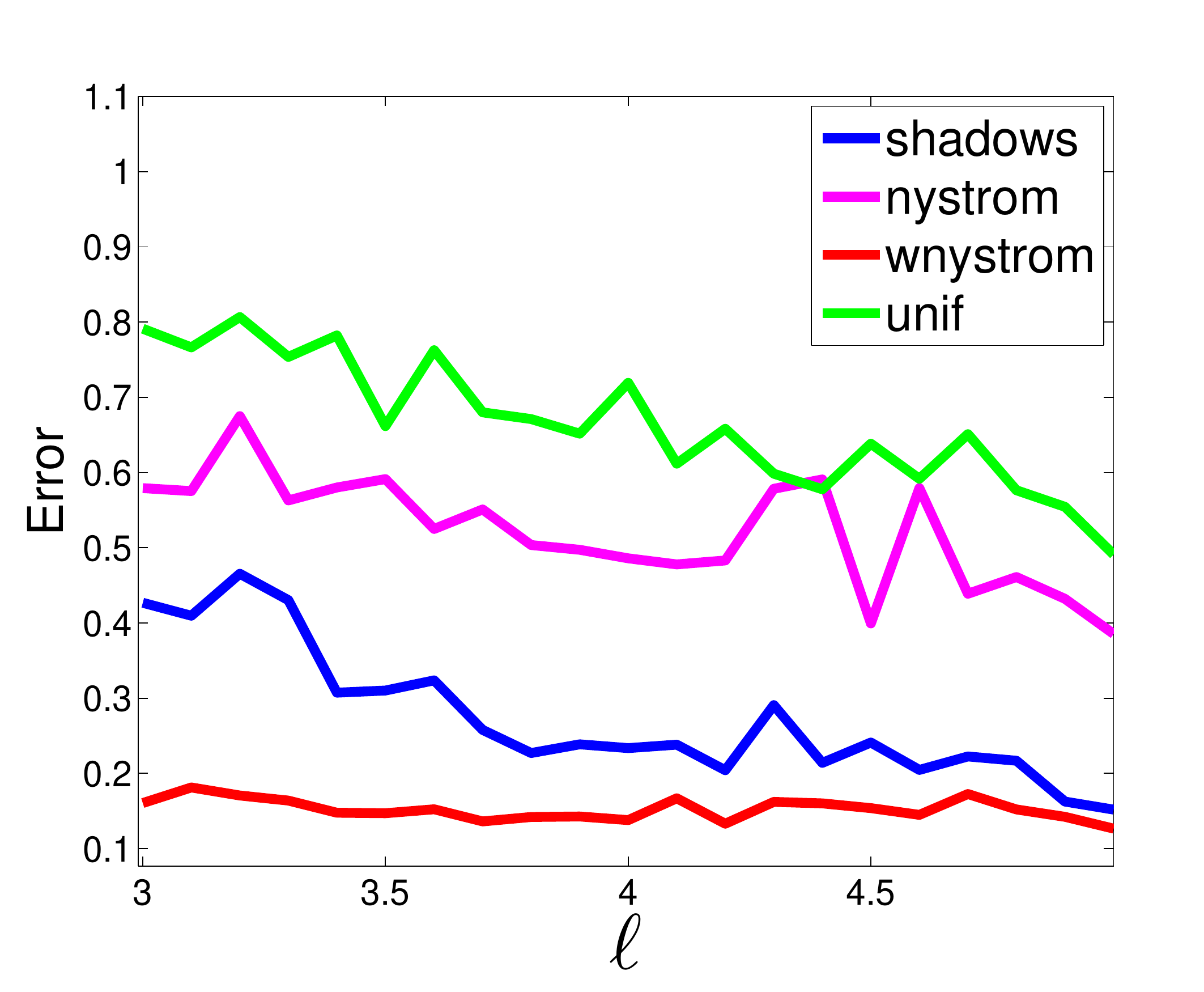}
  %\label{fig1:subfig1}
}
\subfigure[Eigen speedup]{
  \includegraphics[scale=0.185,clip=true,trim=0.1in 0.05in 0.5in  0.4in]{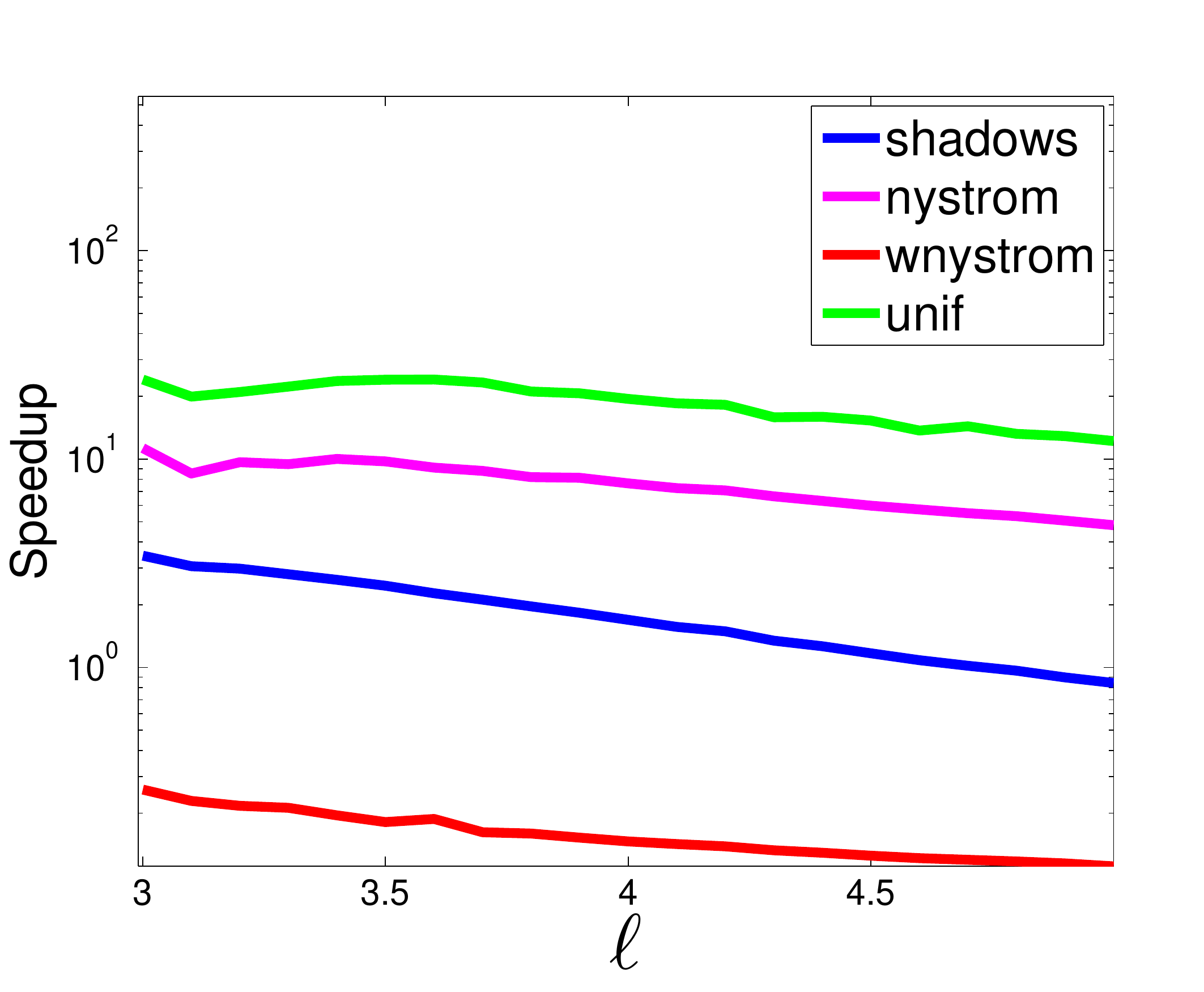}
}
\subfigure[Testing speedup]{
  \includegraphics[scale=0.185,clip=true,trim=0.1in 0.05in 0.5in  0.4in]{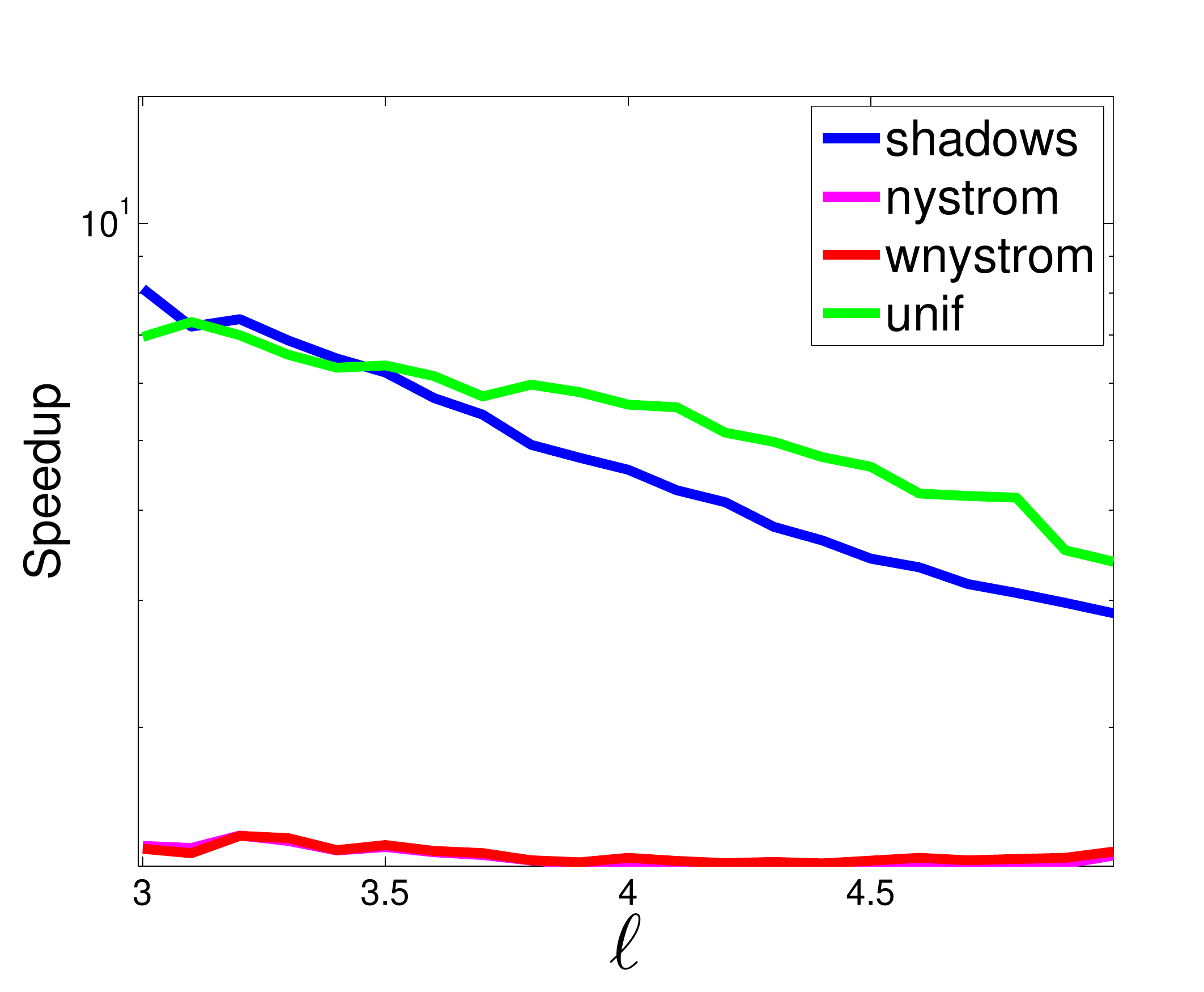}
  %\label{fig1:subfig1}
}
\caption{Eigenembedding comparison w/Nystr\"{o}m methods for \textbf{german} as $\ell$ is
  varied ($\ntrain = 800$).}
\label{fig_ny_german}
\end{figure}

\begin{figure}[ht!]
\centering
\subfigure[Eigenvalue Deviation]{
  \includegraphics[scale=0.185,clip=true,trim=0.1in 0.05in 0.5in  0.4in]{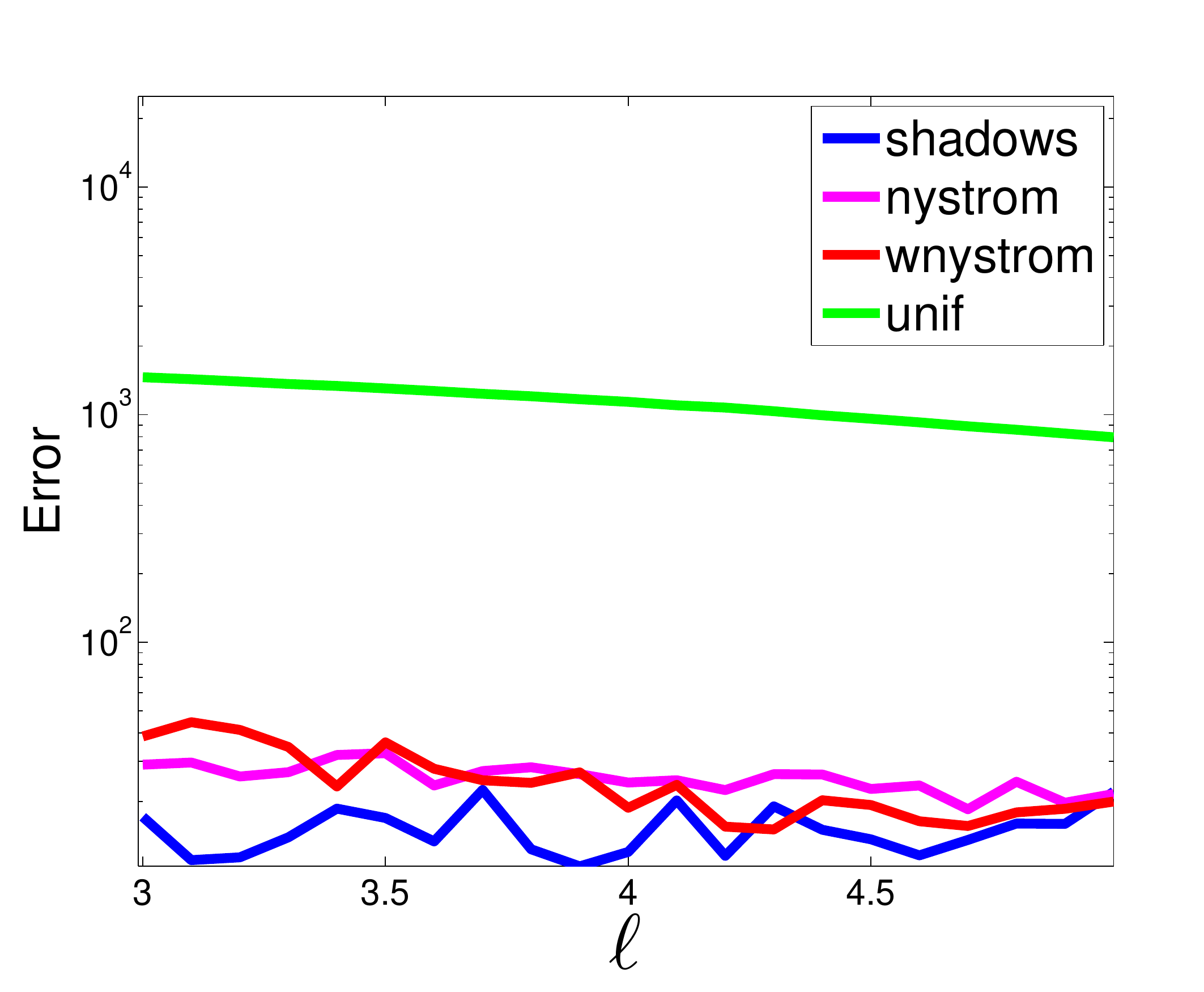}
  %\label{fig1:subfig1}
}
\subfigure[Embedding Accuracy]{
  \includegraphics[scale=0.185,clip=true,trim=0.1in 0.05in 0.5in  0.4in]{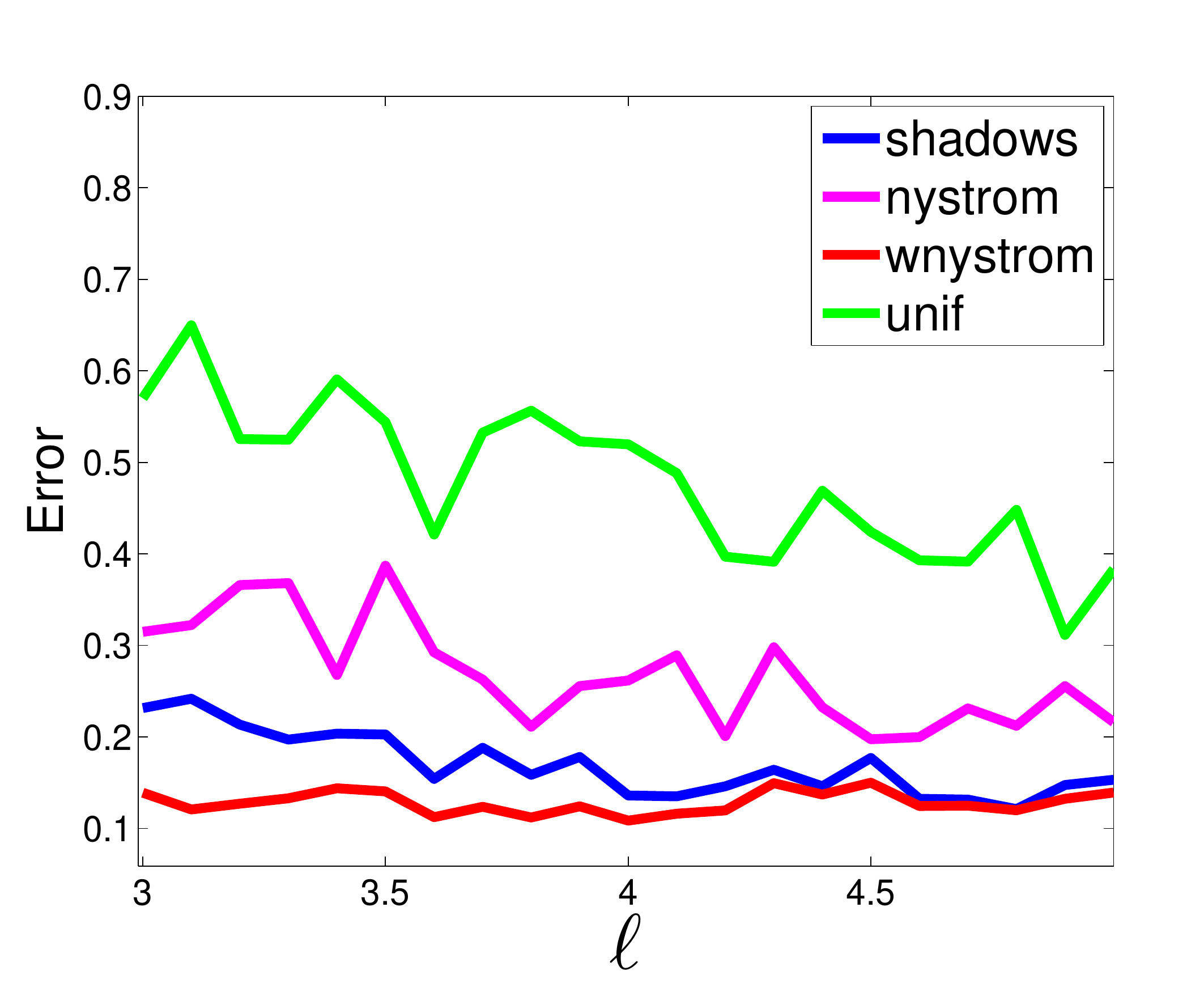}
  %\label{fig1:subfig1}
}
\subfigure[Eigen speedup]{
  \includegraphics[scale=0.185,clip=true,trim=0.1in 0.05in 0.5in  0.4in]{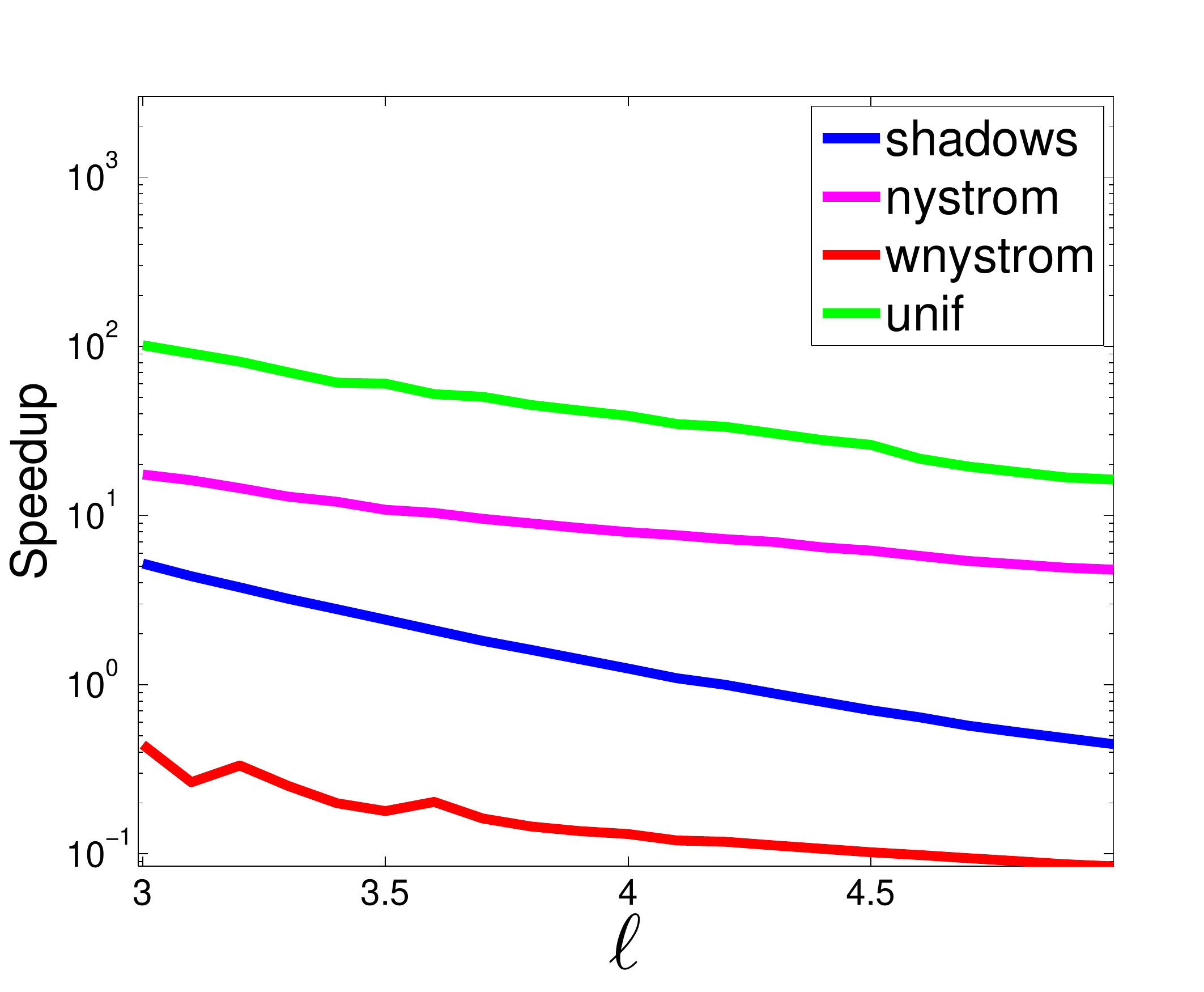}
}
\subfigure[Testing speedup]{
  \includegraphics[scale=0.185,clip=true,trim=0.1in 0.05in 0.5in  0.4in]{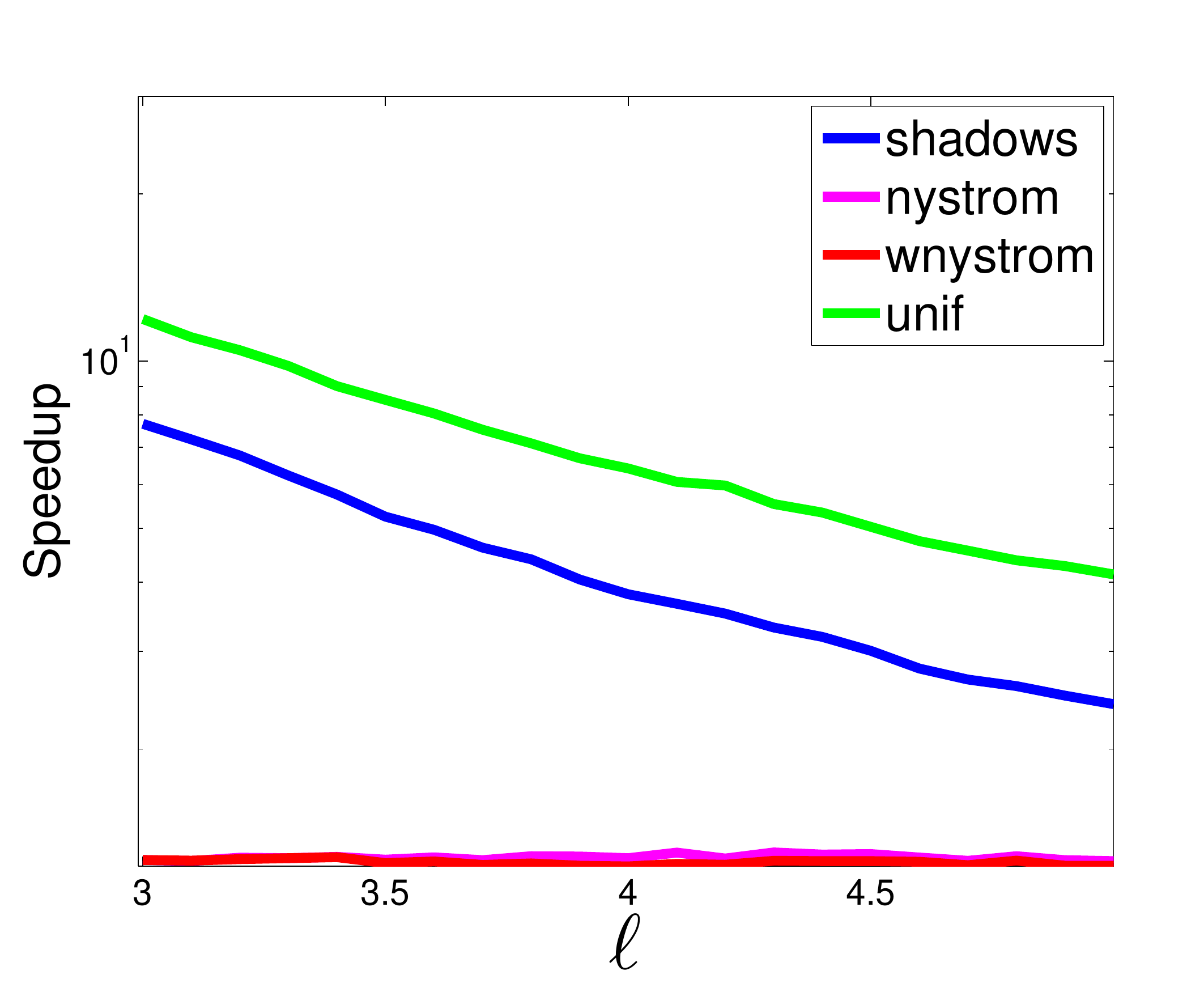}
  %\label{fig1:subfig1}
}
\caption{Eigenembedding comparison w/Nystr\"{o}m for \textbf{pendigits} as $\ell$ is
  varied  ($\ntrain = 2,800$).}
\label{fig_ny_pendigits}
\end{figure}

\begin{table}[t]
\setlength{\tabcolsep}{4pt}
\begin{minipage}{0.5\textwidth}
\caption{Datasets used.  \label{table1}}
\vspace*{-0.1in}
\begin{center}
\begin{scriptsize}
\begin{sc}
\fbox{
\begin{tabular}{l|c|c|c|c}
                  &  \textbf{german}         & \textbf{pendigits}    & \textbf{usps} & \textbf{yale} \\
$\nsamp$          & 1,000   & 3,500 & 9,298 &  5,768 \\
dim               & 24      & 16   & 256   & 520 \\
classes           & 2      & 10     & 10    & 10 \\
$k$               & 5       & 5     & 15    & 10 \\
$\s$              & 30     & 120   & 18    & 17 \\
%\hline
\end{tabular}}
\end{sc}
\end{scriptsize}
\end{center}
\vskip -0.2in
%\end{table}
\end{minipage}
\begin{minipage}{0.5\textwidth}
%\begin{table}[h]
\vspace*{-0.15in}
\setlength{\tabcolsep}{4pt}
\caption{Training cost and storage comparison. \label{table2}}
\vspace*{-0.1in}
\begin{center}
\begin{scriptsize}
\begin{sc}
\fbox{
\begin{tabular}{l|c|c|c}
                  &  ShDE+RSKPCA   & Nystr\"{o}m    & WNystr\"{o}m \\
time &  $\O(\ncent\nsamp + \ncent^3)$ & $\O(\ncent\nsamp + \ncent^3)$ & $\O(\ncent\nsamp + \ncent^3)$\\
space       &  $\O(\ncent \neigs)$      & $\O(\nsamp\neigs)$       & $\O(\nsamp\neigs)$      \\
%\hline
\end{tabular}}
\end{sc}
\end{scriptsize}
\end{center}
\vskip -0.2in
\end{minipage}
\vspace*{-0.2in}
\end{table}

\paragraph{KPCA classification comparison with Nystr\"{o}m methods.}
This experiment examines the effectiveness of ShDE+RSKPCA for classification
compared with the Nystr\"{o}m methods used previously.
Classification utilizes the $k$-nn classifier with $k=3$, using $10$-fold
cross-validation. The accuracy, training and testing speedups, and the
percentage of data are reported. The results are shown in Figures
\ref{fig_ny_usps} and \ref{fig_ny_yale} for the \textbf{usps} and
\textbf{yale} methods respectively (none = KPCA).  For the $k$-nn
classification case, ShDE+RSKPCA has competitive accuracy with the Nystr\"{o}m
methods, while providing significant training and testing speedups. The
training speedup over the Nystr\"{o}m method in this case is because the
eigenembedding of the data needs to be computed as part of the $k$-nn
classifier training.  Note that the data retained here, Fig.
\ref{retained}(c,d), is less than 10\% for $\ell \in [3,5]$, implying
noticeable speedup in the KPCA step of the classifier (during training and
evaluation).

\vspace*{-0.025in}
\paragraph{RSKPCA with different RSDE schemes.}
\vspace*{-0.025in}
RSKPCA is performed using alternative RSDEs to demonstrate the
influence of the RSDE algorithm on accuracy, Figs. \ref{fig_center_usps}
and \ref{fig_center_yale}.
Following \cite{Zhang}, $k$-means provides a means to generate an RSDE at a
time complexity of $\O(\ncent\nsamp)$ (but tends to be slow due
being iterative).  Second, KDE paring \cite{Freedman} subsamples from
the original dataset and computes the estimate from the reduced set, at
an $\mathcal{O}(\ncent)$ cost. Third, kernel herding is examined
\cite{Chen:2010}, which provides a mechanism to sample from a KDE using a
nonlinear dynamical system.  The samples are shown to be good representative
samples.  Their generation is $\O(\nsamp^2 \ncent)$.  All of these
algorithms require the user to provide the number $\ncent$.  It can be seen
that the quality of the RSDE does influence the accuracy for small $\ell$,
less so for larger $\ell$.  The center selection schemes that lead to
improved accuracy are costlier than ShDE, thus decreasing training gains.
Evaluation speedup is the same for all methods.

\begin{figure}[ht!]
\centering
\subfigure[Accuracy]{
  \includegraphics[scale=0.212,clip=true,trim=0.1in 0.05in 0.75in  0.4in]{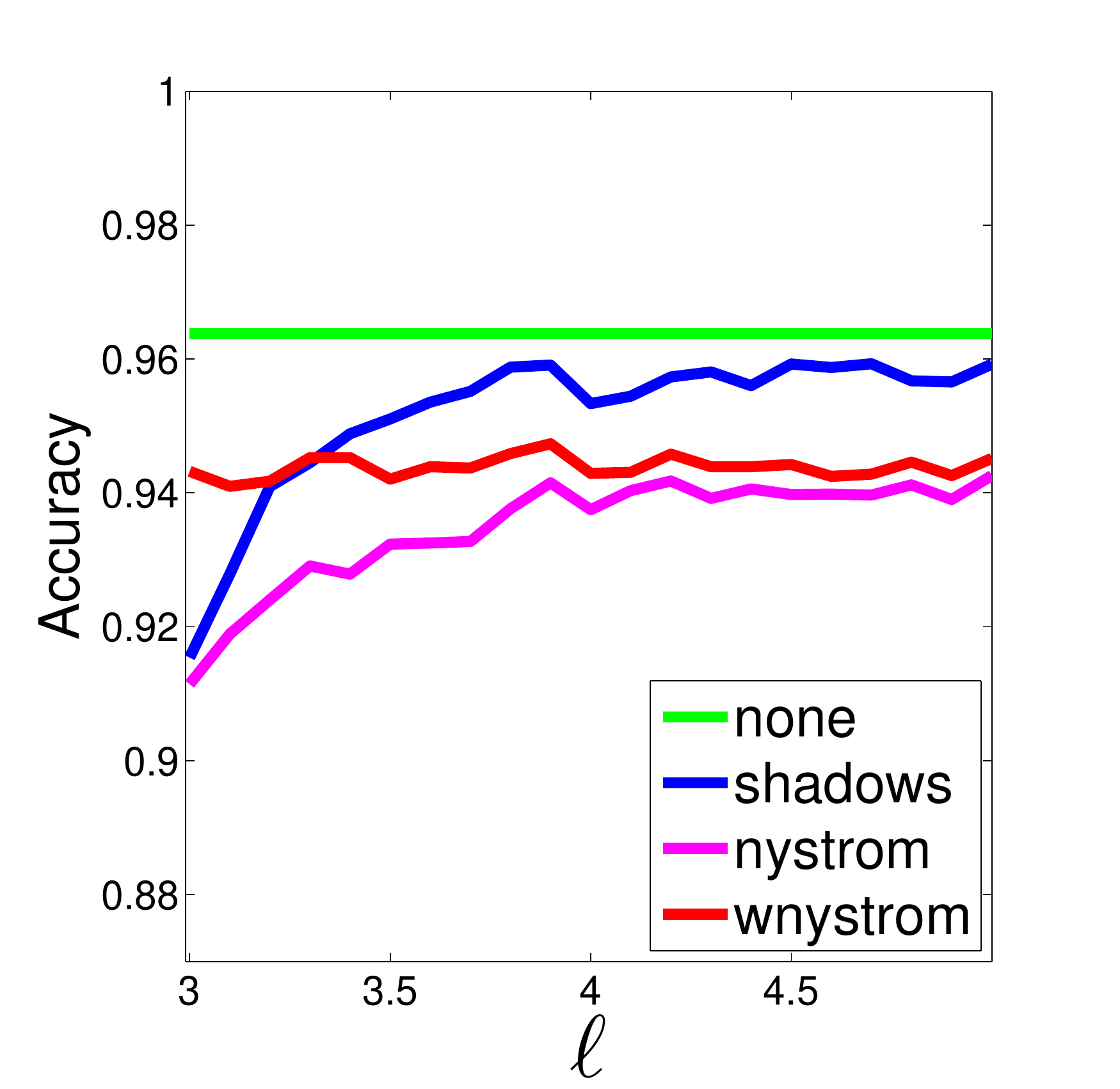}
  %\label{fig1:subfig1}
}
\subfigure[Training speedup]{
  \includegraphics[scale=0.212,clip=true,trim=0.1in 0.05in 0.75in  0.4in]{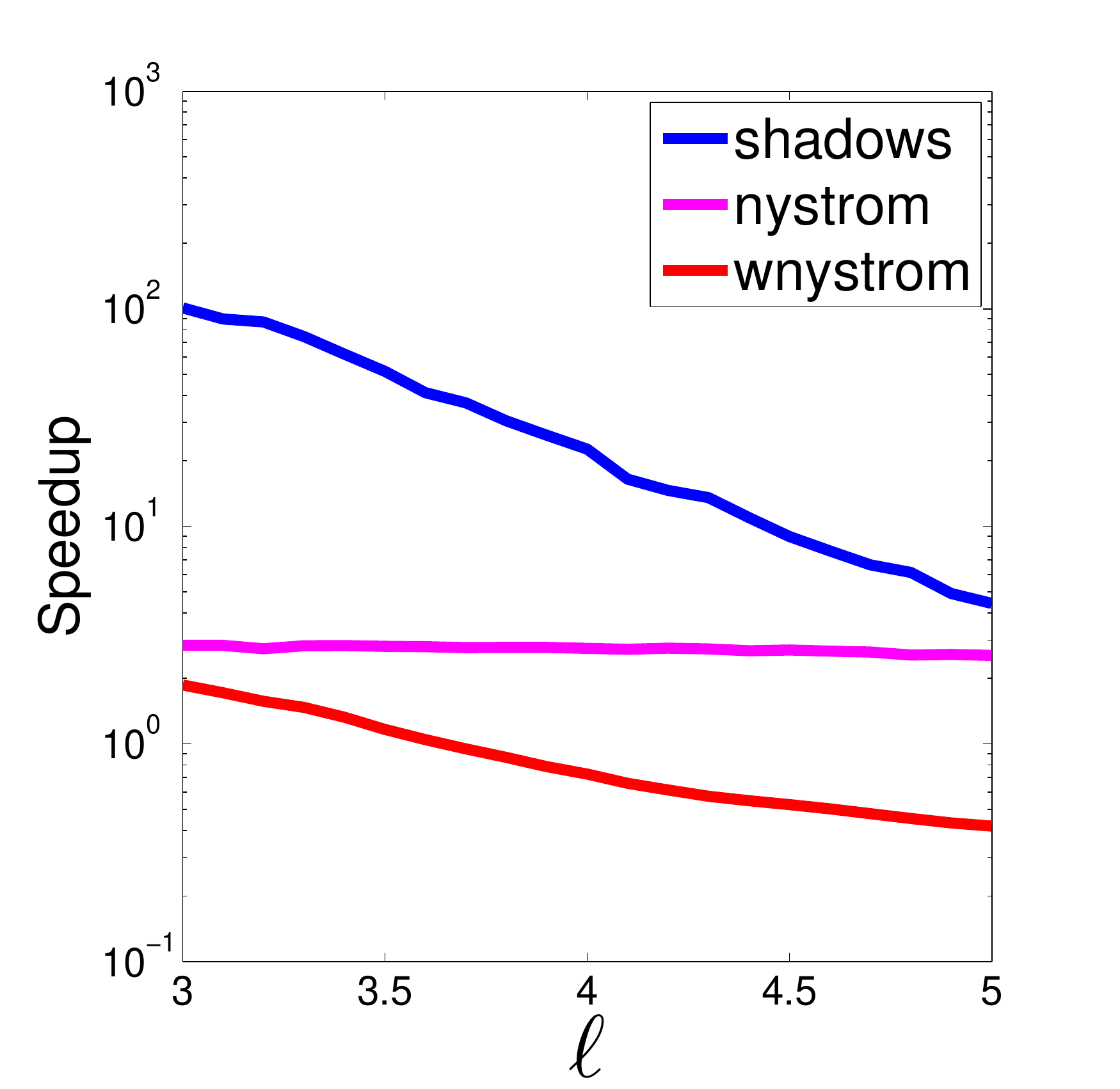}
}
\subfigure[Testing speedup]{
  \includegraphics[scale=0.212,clip=true,trim=0.1in 0.05in 0.75in  0.4in]{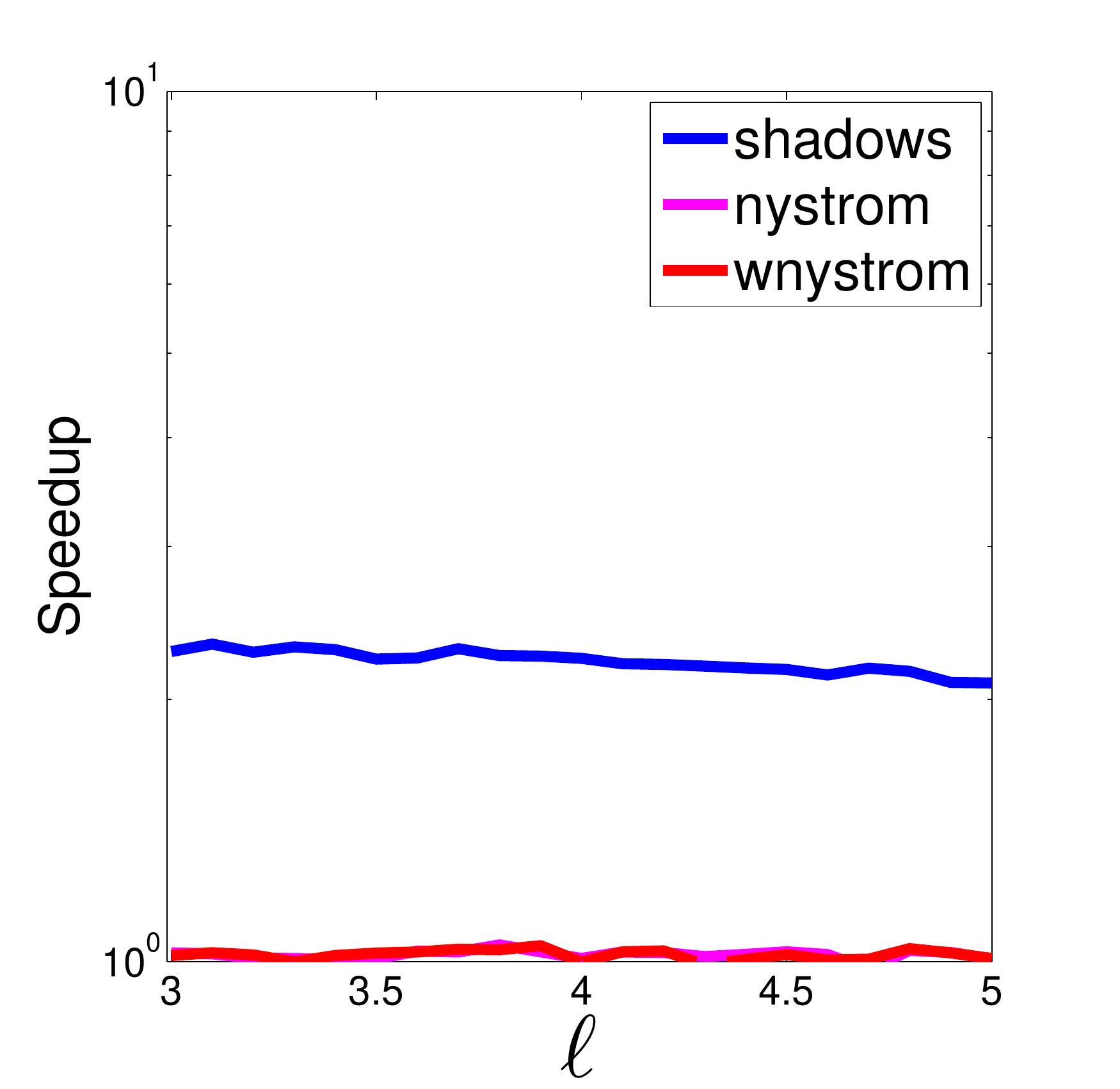}
  %\label{fig1:subfig1}
}
\subfigure[Total speedup]{
  \includegraphics[scale=0.212,clip=true,trim=0.1in 0.05in 0.75in  0.4in]{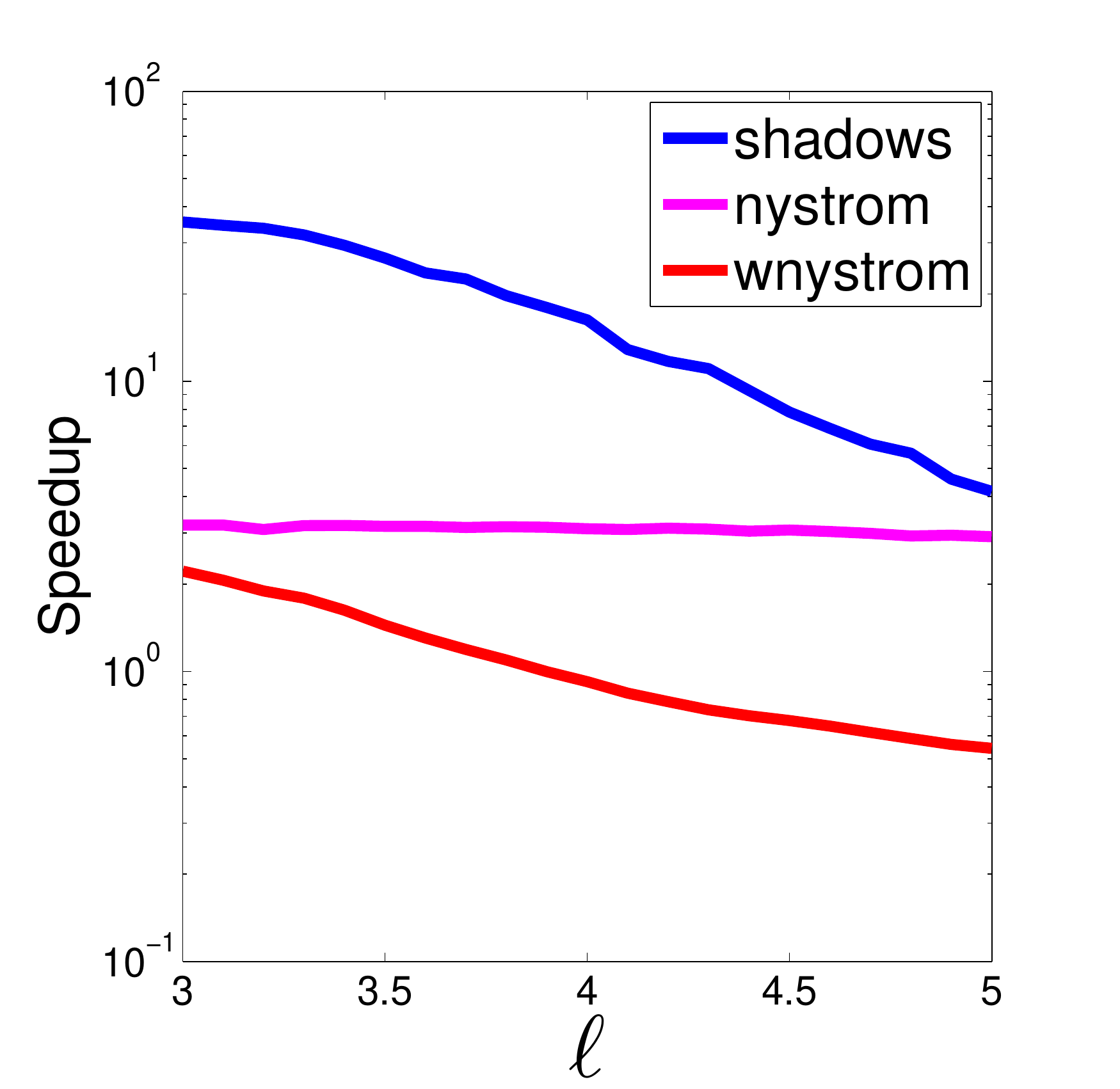}
  %\label{fig1:subfig1}
}
\vspace*{-0.10in}
\caption{Classification comparison w/Nystr\"{o}m for \textbf{usps} as $\ell$ is
  varied ($\ntrain = 8,368$).}
\label{fig_ny_usps}
\end{figure}

\begin{figure}[ht!]
\centering
\subfigure[Accuracy]{
  \includegraphics[scale=0.212,clip=true,trim=0.1in 0.05in 0.75in  0.4in]{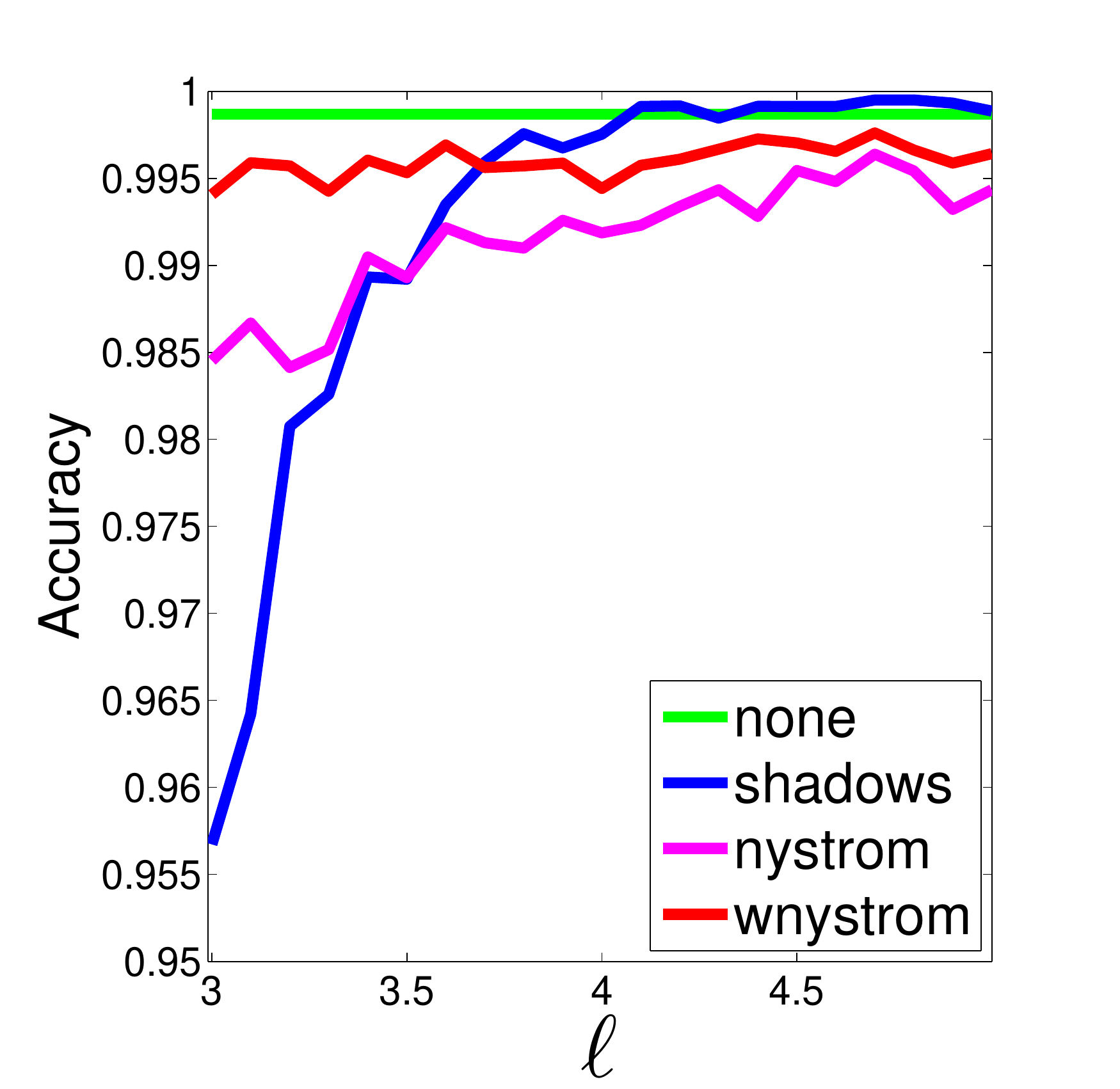}
  %\label{fig1:subfig1}
}
\subfigure[Training speedup]{
  \includegraphics[scale=0.212,clip=true,trim=0.1in 0.05in 0.75in  0.4in]{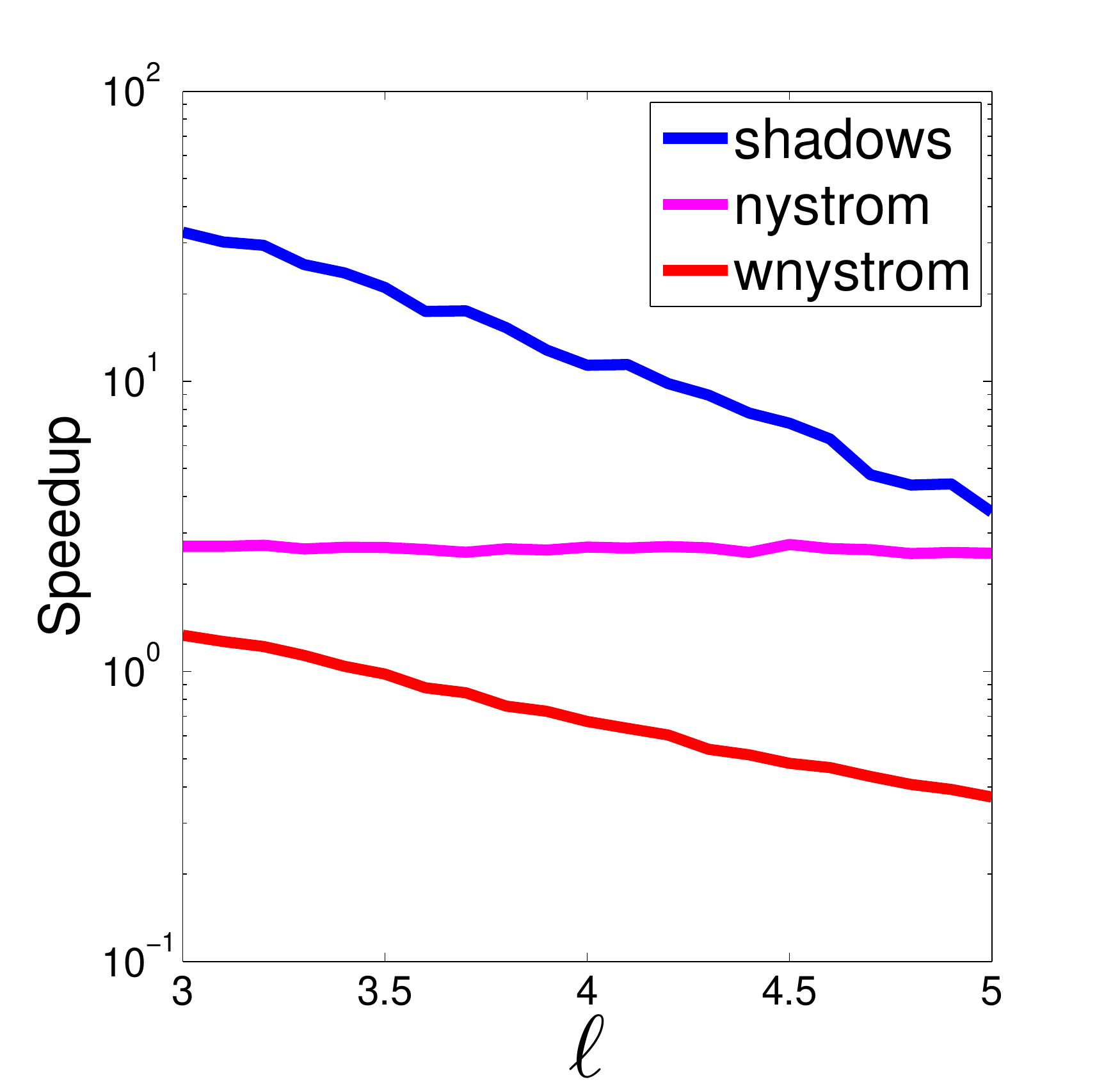}
}
\subfigure[Testing speedup]{
  \includegraphics[scale=0.212,clip=true,trim=0.1in 0.05in 0.75in  0.4in]{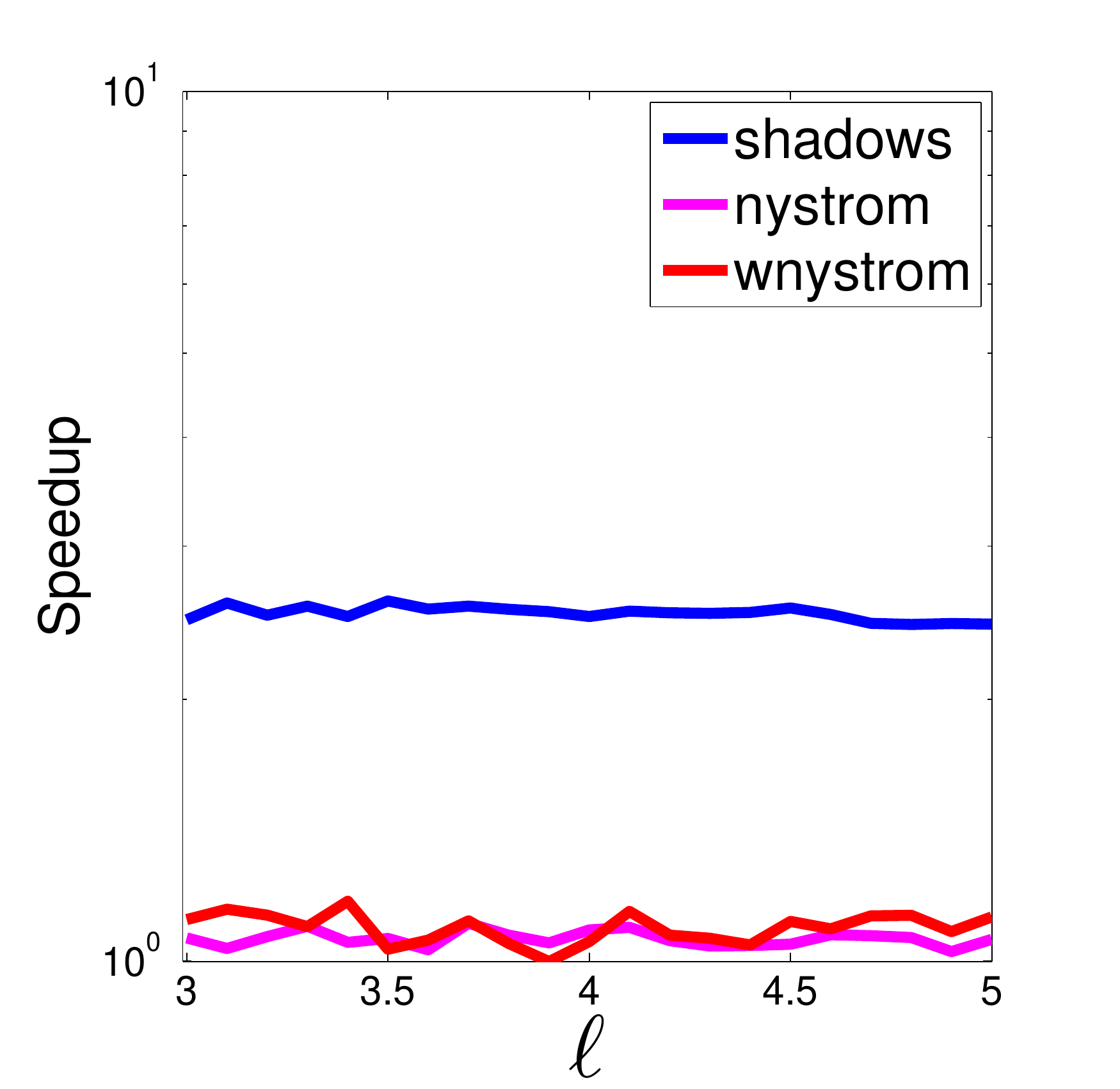}
  %\label{fig1:subfig1}
}
\subfigure[Total speedup]{
  \includegraphics[scale=0.212,clip=true,trim=0.1in 0.05in 0.75in  0.4in]{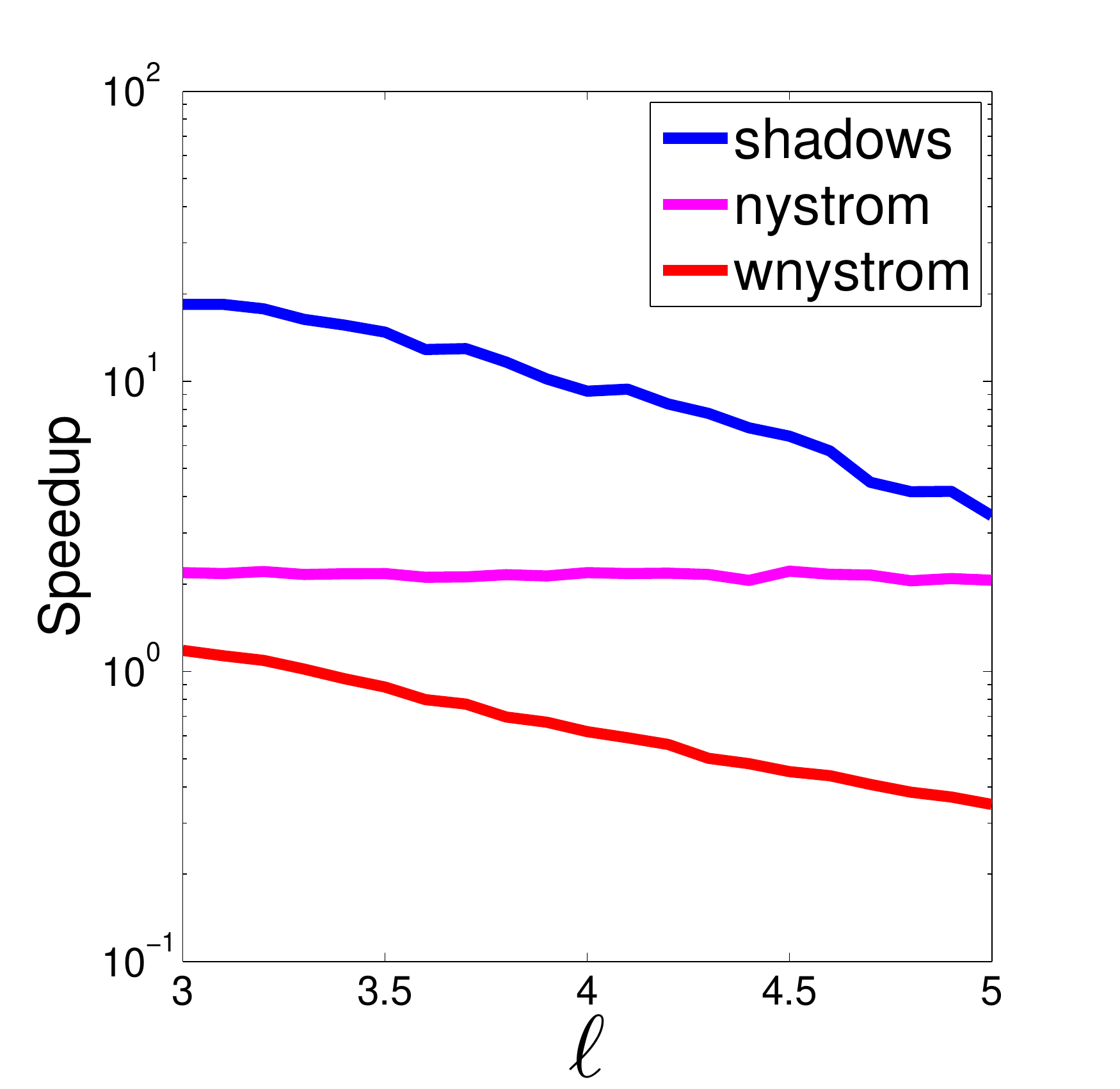}
  %\label{fig1:subfig1}
}
\vspace*{-0.10in}
\caption{Classification comparison w/Nystr\"{o}m methods for \textbf{yale} as $\ell$ is
  varied ($\ntrain = 5,191$).}
\label{fig_ny_yale}
\end{figure}

\begin{figure}[h!]
\centering
\subfigure[\textbf{german}]{
  \includegraphics[width=0.22\columnwidth,clip=true,trim=0.1in 0.05in 0.4in  0.4in]{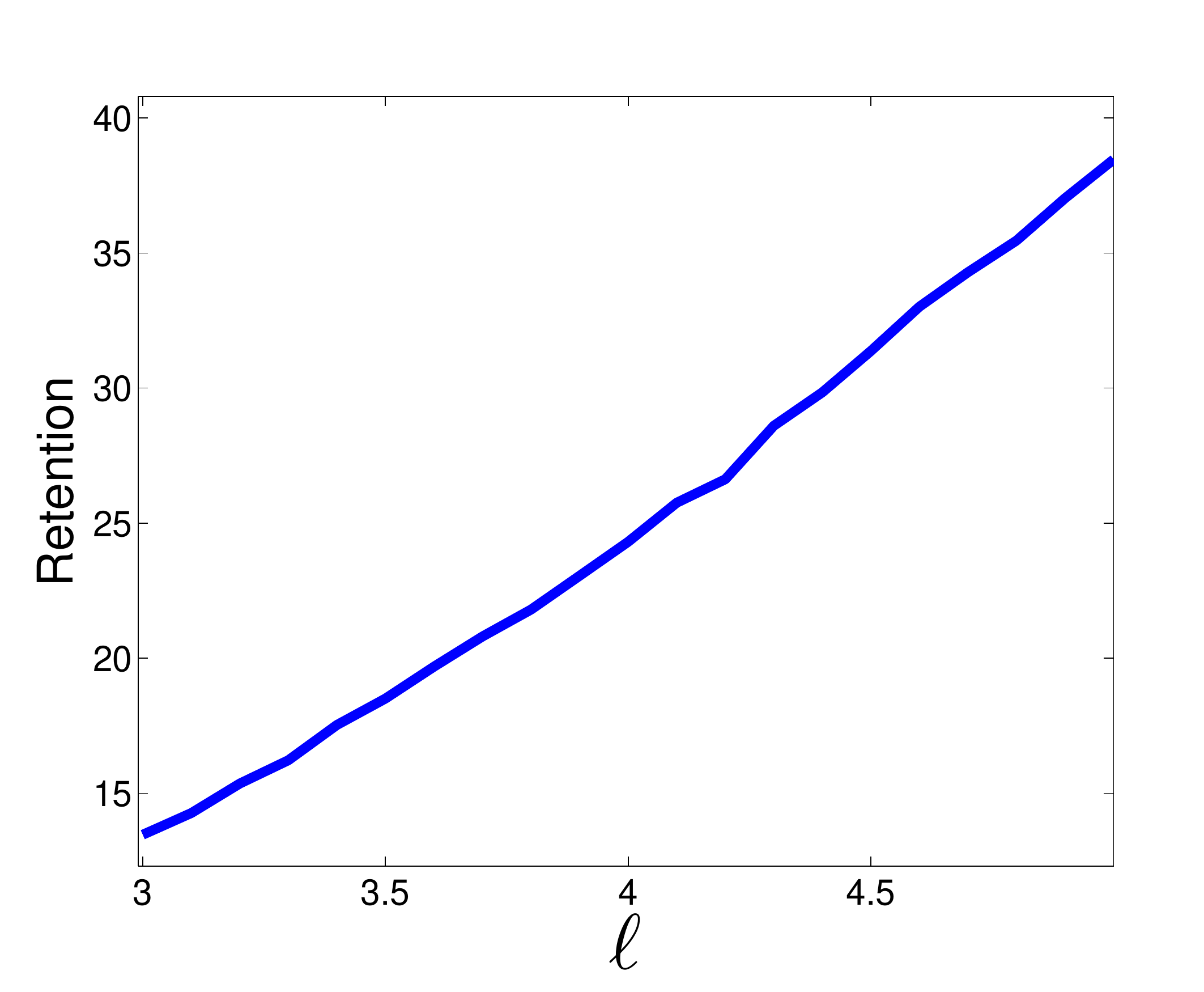}
  %\label{fig1:subfig1}
}
\subfigure[\textbf{pendigits}]{
  \includegraphics[width=0.22\columnwidth,clip=true,trim=0.1in 0.05in 0.4in  0.4in]{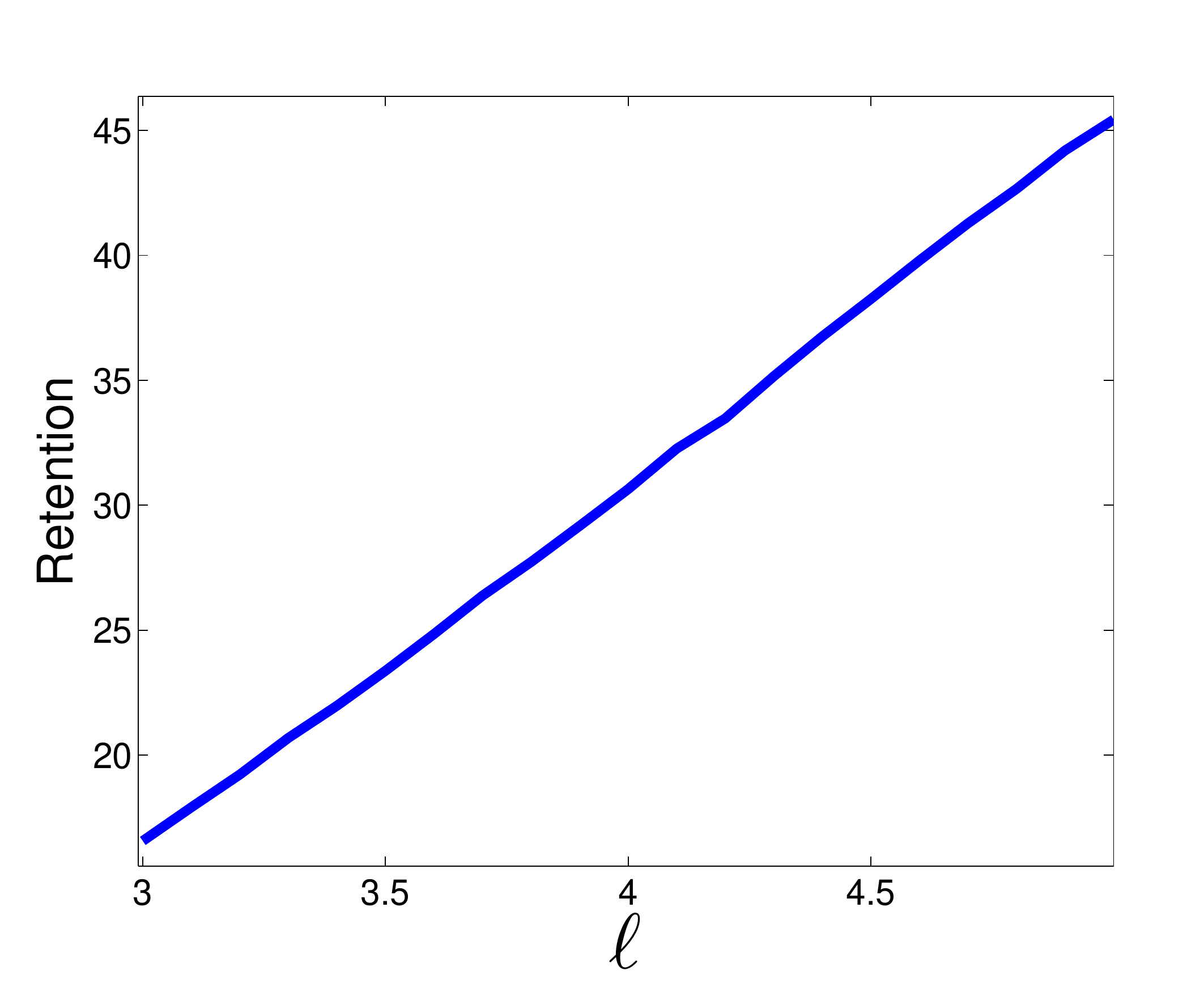}
}
\subfigure[\textbf{usps}]{
  \includegraphics[width=0.22\columnwidth,clip=true,trim=0.1in 0.05in 0.75in  0.4in]{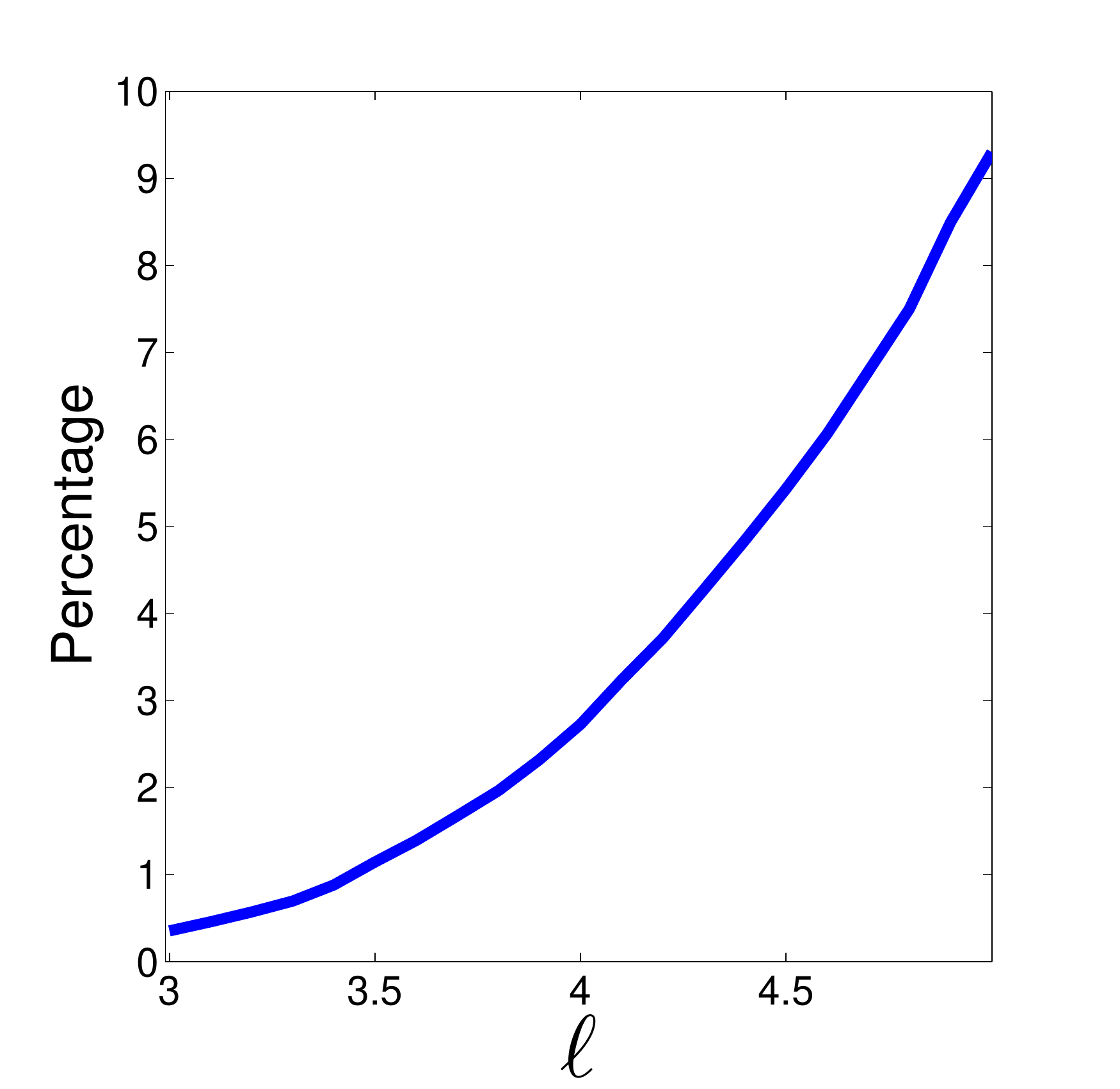}
  %\label{fig1:subfig1}
}
\subfigure[\textbf{yale}]{
  \includegraphics[width=0.22\columnwidth,clip=true,trim=0.1in 0.05in 0.75in  0.4in]{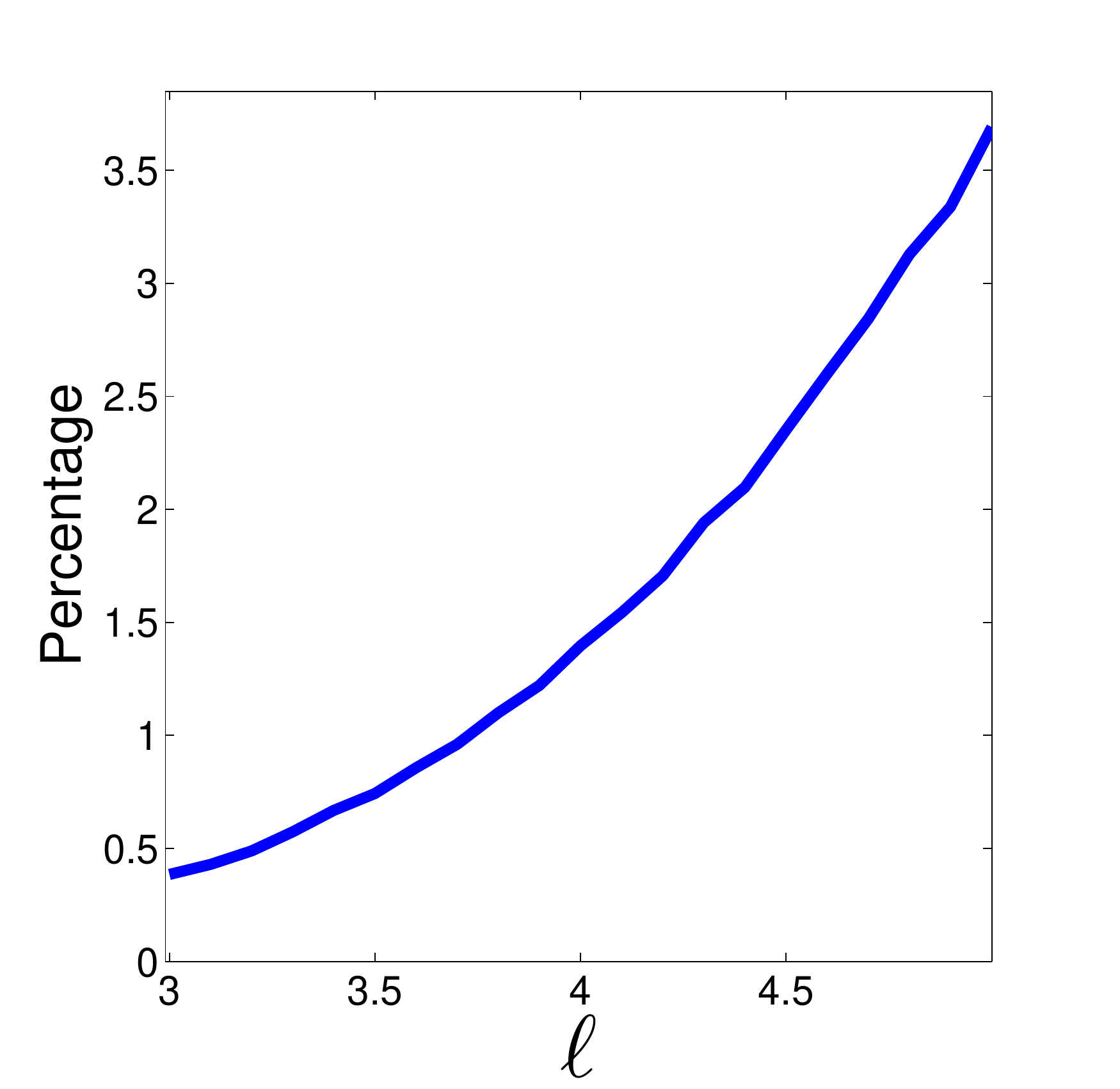}
  %\label{fig1:subfig1}
}
\caption{Percentage of data retained. \label{retained}}
\end{figure}

\begin{figure}[ht!]
\centering
\subfigure[Accuracy]{
  \includegraphics[scale=0.212,clip=true,trim=0.1in 0.05in 0.75in  0.4in]{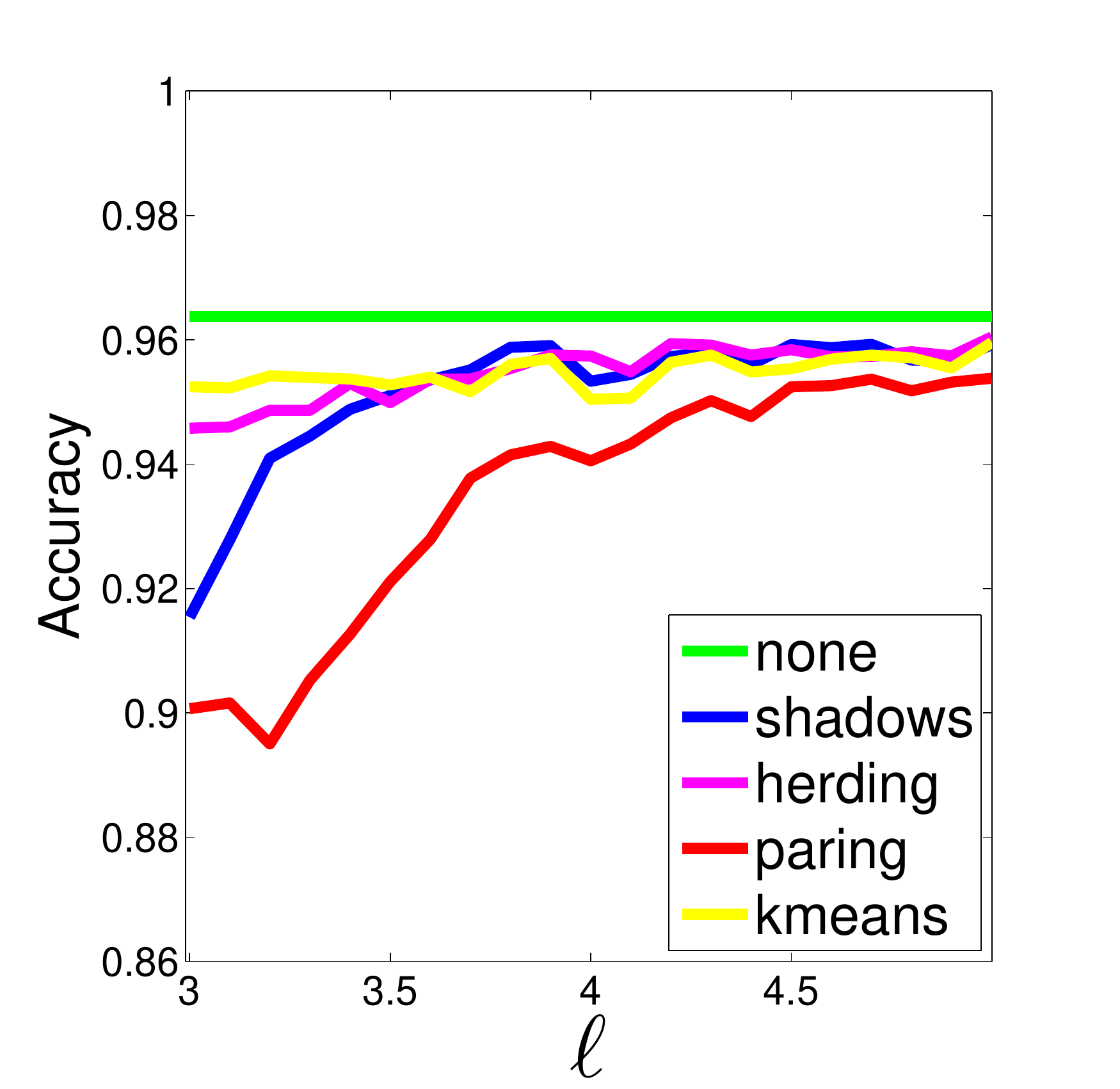}
  %\label{fig1:subfig1}
}
\subfigure[Training speedup]{
  \includegraphics[scale=0.212,clip=true,trim=0.1in 0.05in 0.75in  0.4in]{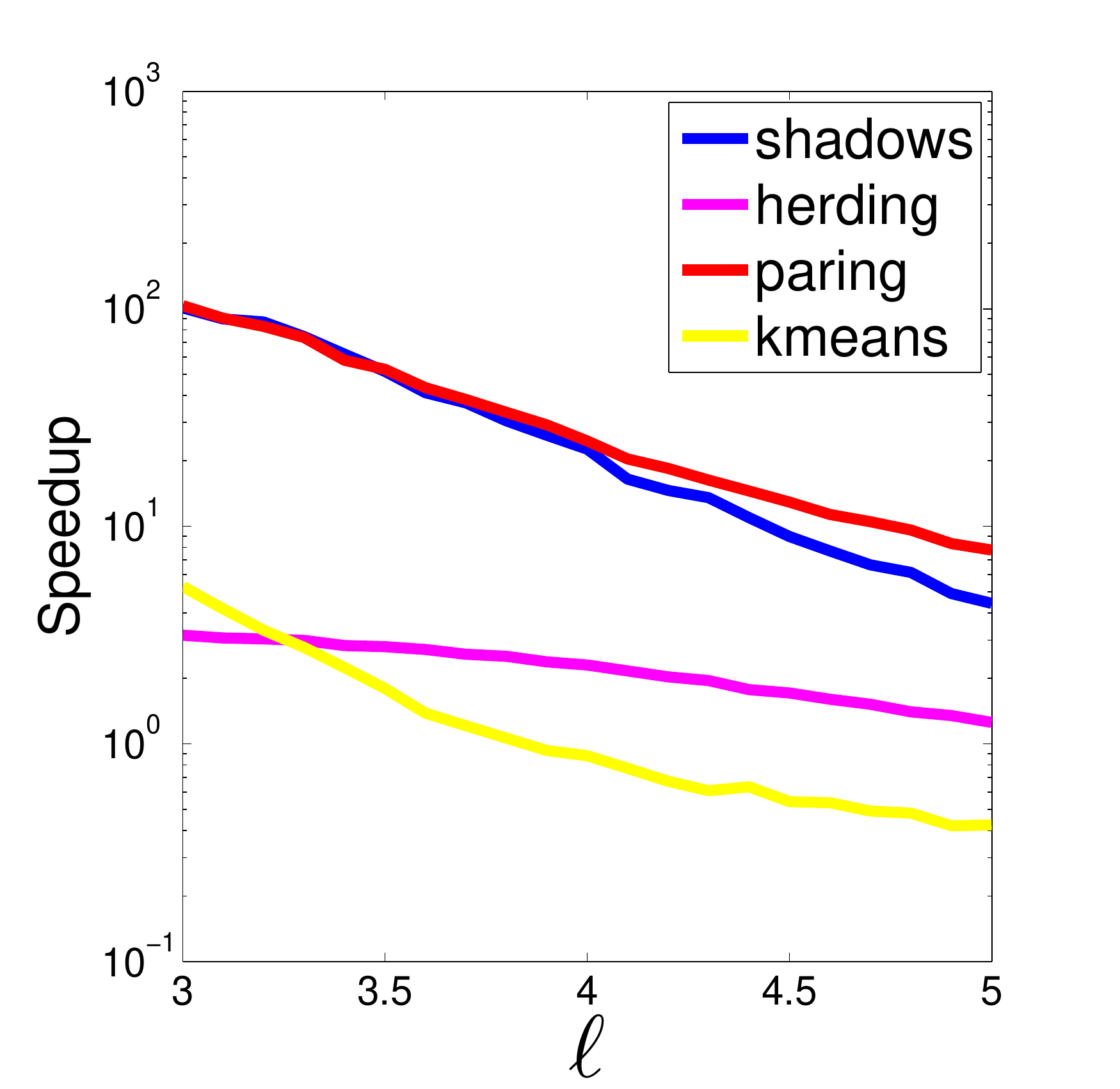}
}
\subfigure[Testing speedup]{
  \includegraphics[scale=0.212,clip=true,trim=0.1in 0.05in 0.75in  0.4in]{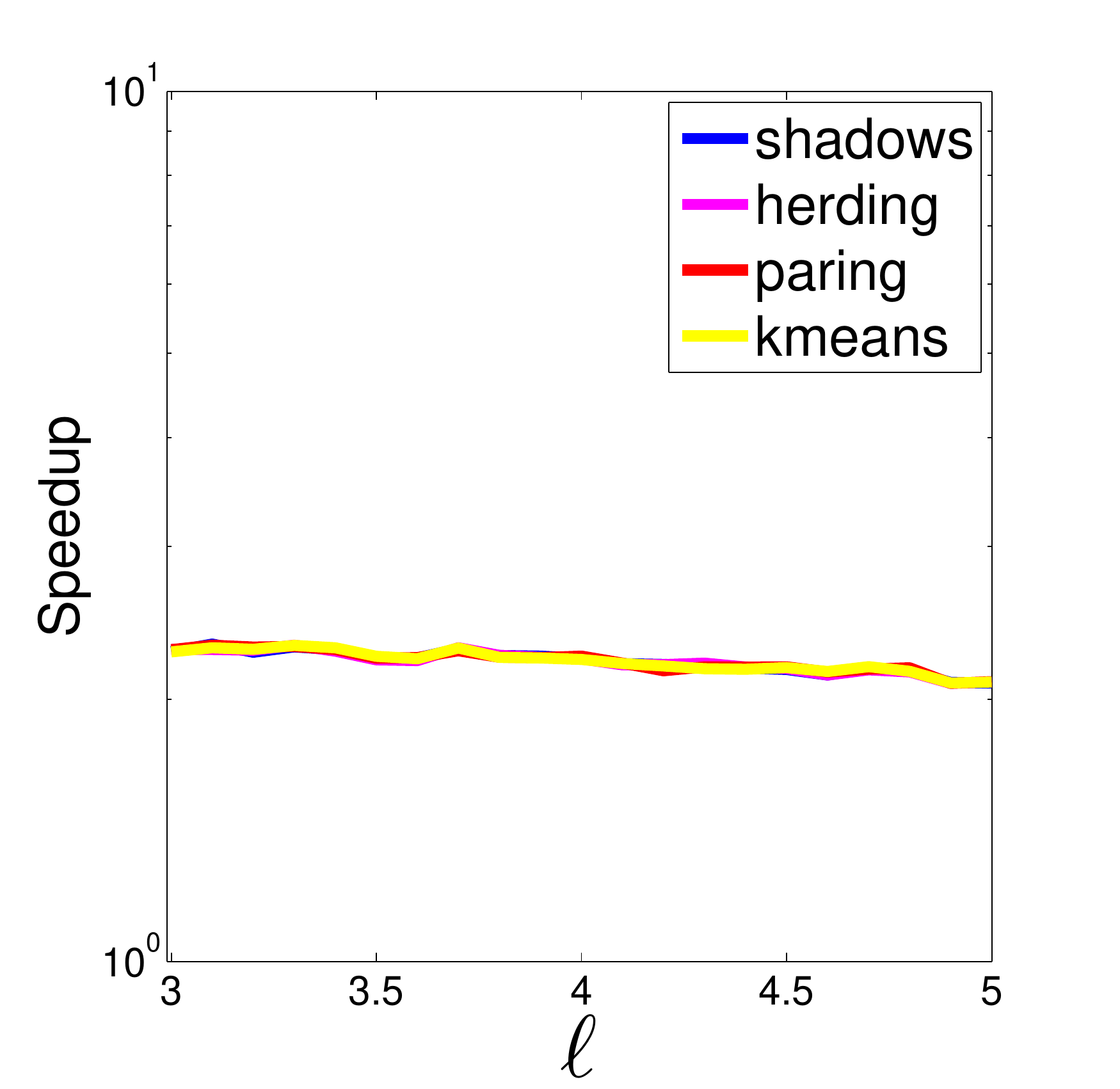}
  %\label{fig1:subfig1}
}
\subfigure[Total speedup]{
  \includegraphics[scale=0.212,clip=true,trim=0.1in 0.05in 0.75in  0.4in]{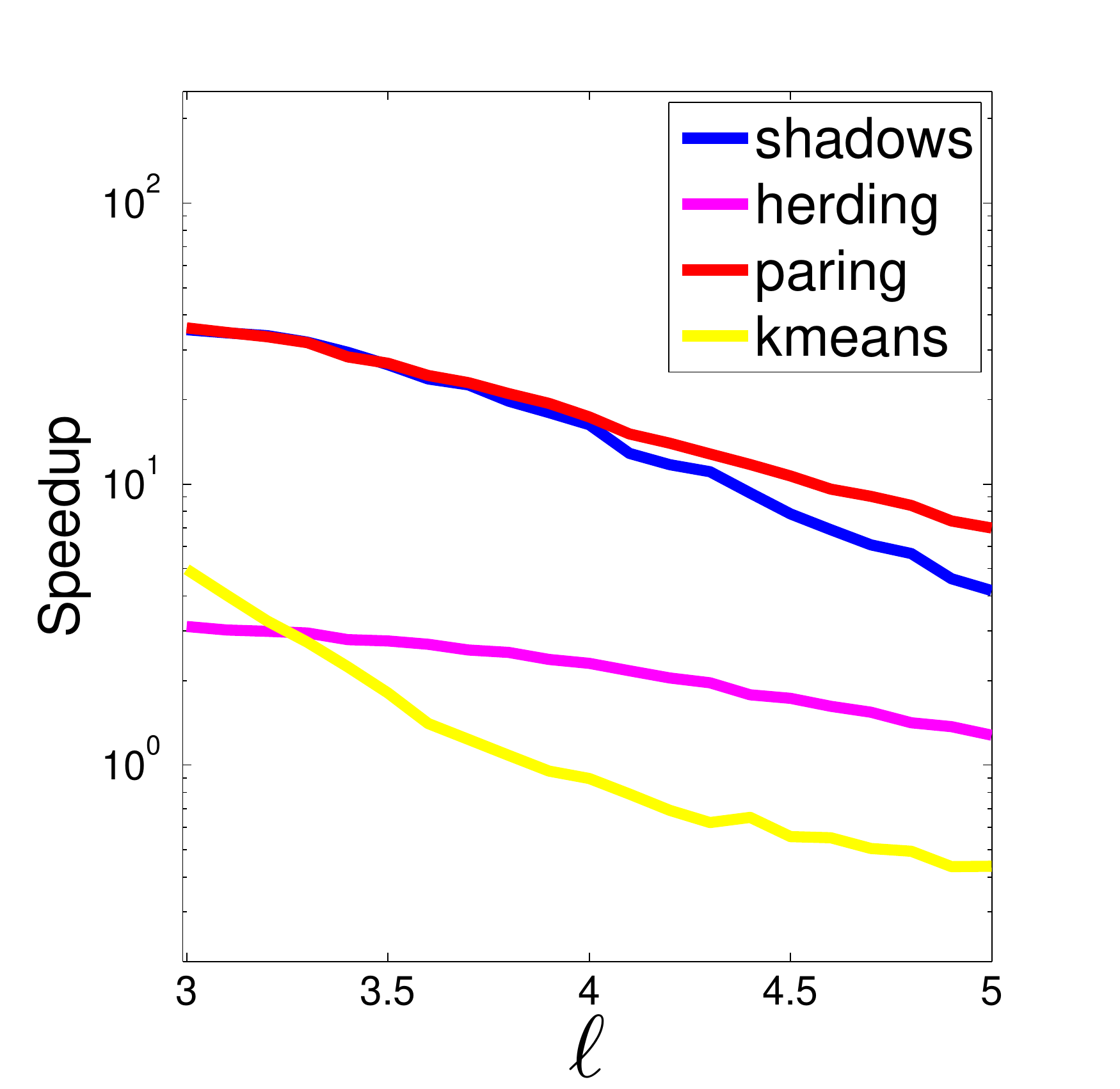}
  %\label{fig1:subfig1}
}
\caption{Classification comparisons w/RSDEs for \textbf{usps} as $\ell$ is
  varied ($\ntrain = 8,368$).}
\label{fig_center_usps}
\end{figure}

\begin{figure}[ht!]
\centering
\subfigure[Accuracy]{
  \includegraphics[scale=0.212,clip=true,trim=0.1in 0.05in 0.75in  0.4in]{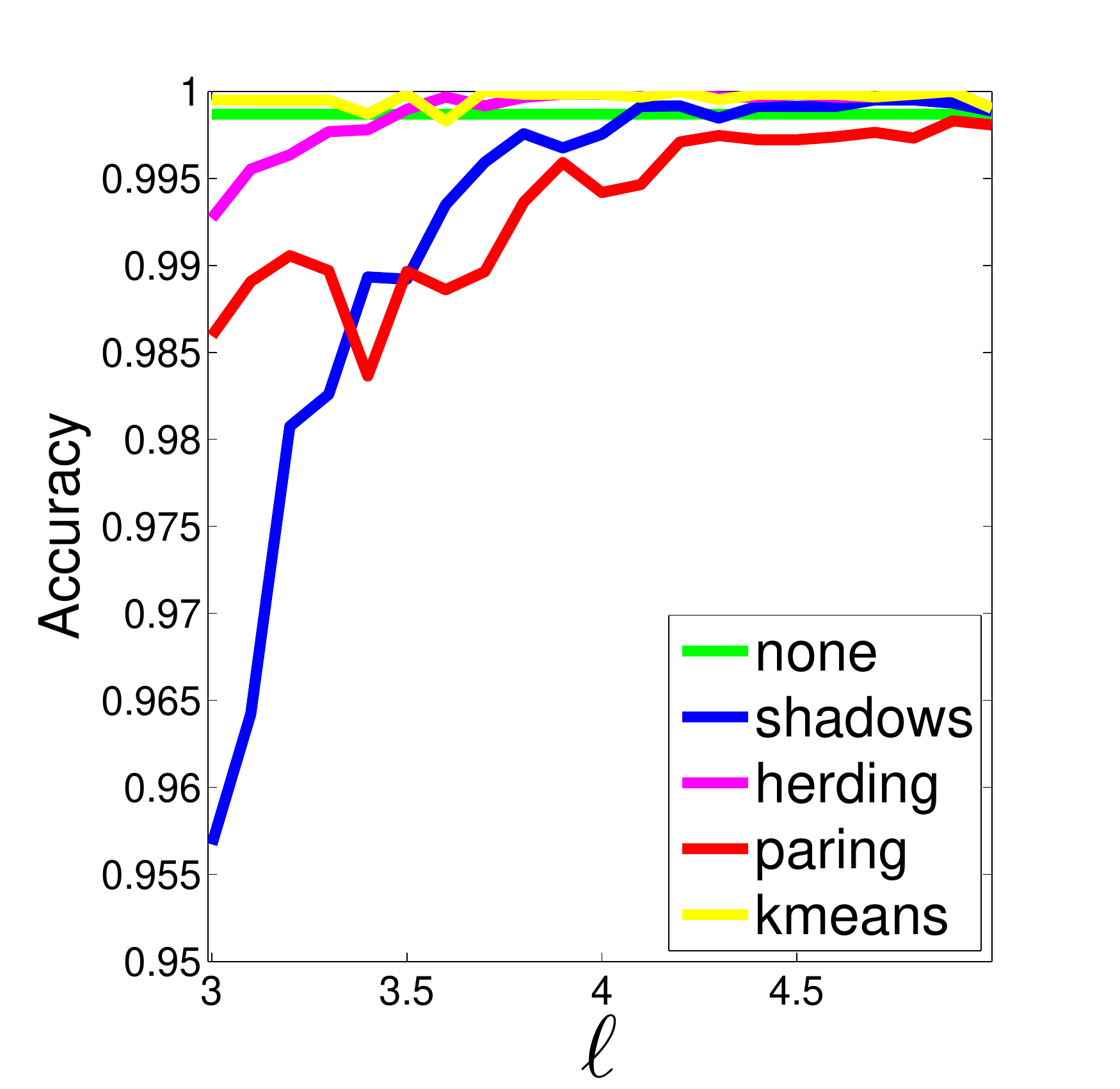}
  %\label{fig1:subfig1}
}
\subfigure[Training speedup]{
  \includegraphics[scale=0.212,clip=true,trim=0.1in 0.05in 0.75in  0.4in]{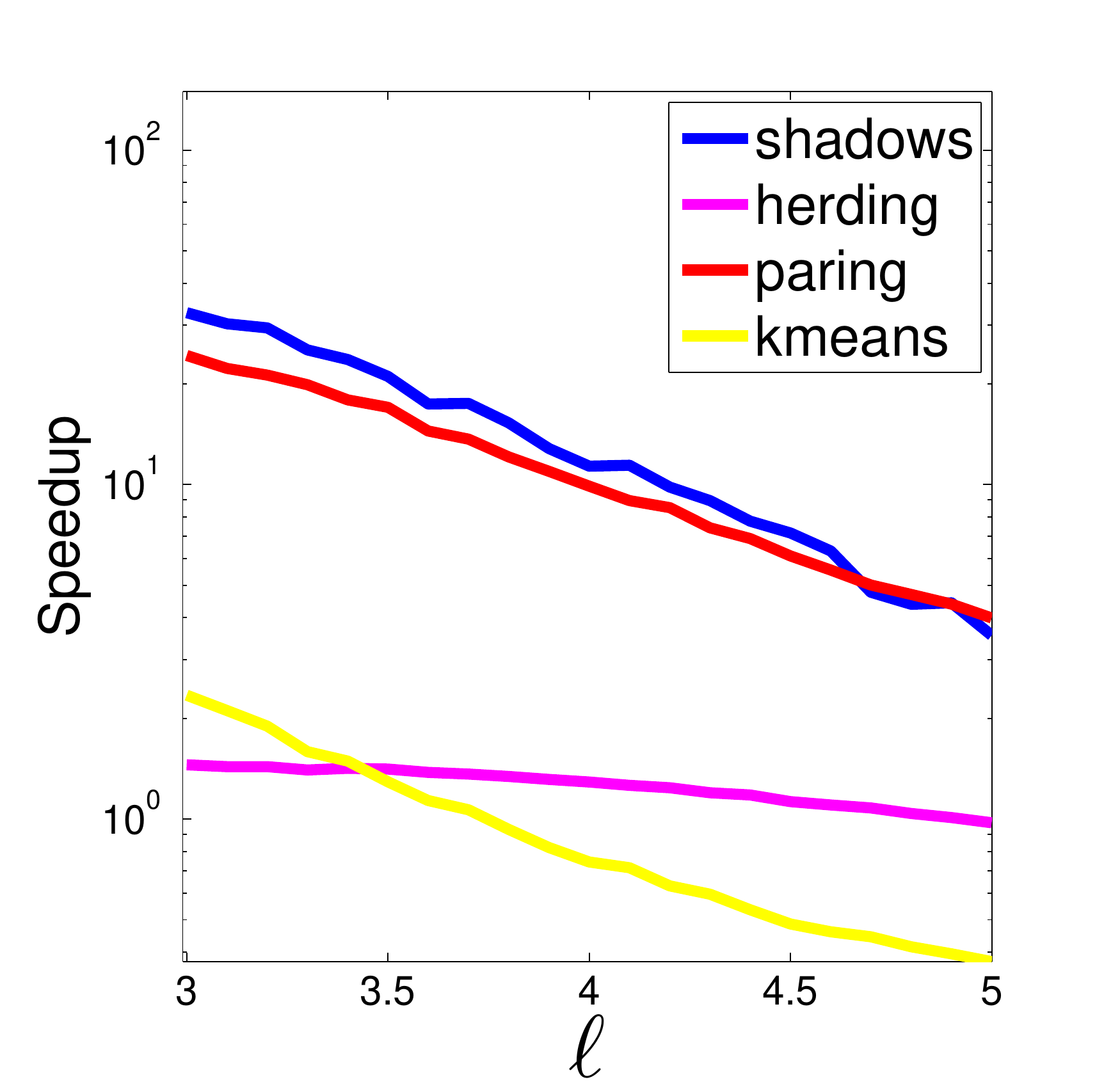}
}
\subfigure[Testing speedup]{
  \includegraphics[scale=0.212,clip=true,trim=0.1in 0.05in 0.75in  0.4in]{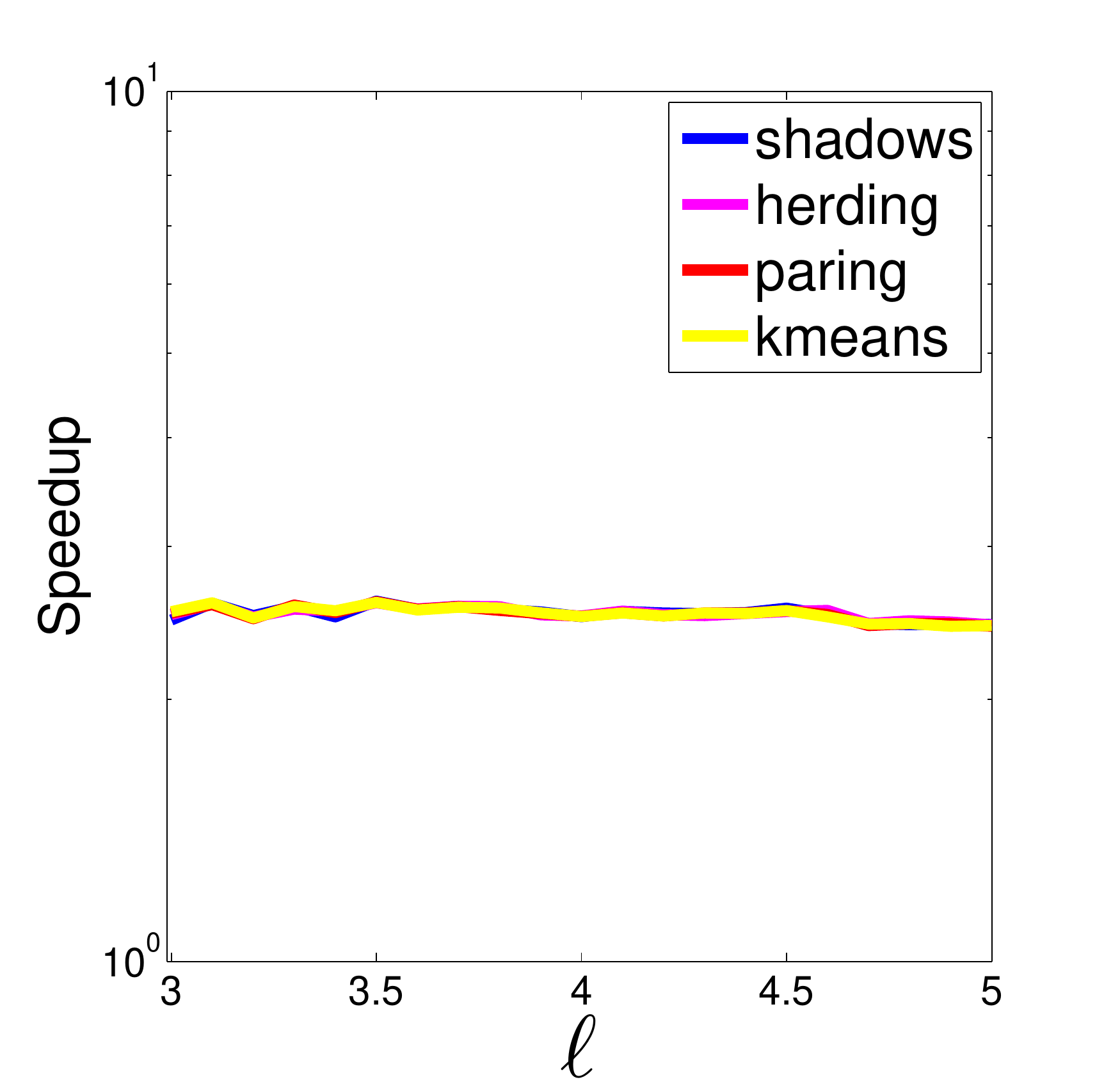}
  %\label{fig1:subfig1}
}
\subfigure[Total speedup]{
  \includegraphics[scale=0.212,clip=true,trim=0.1in 0.05in 0.75in  0.4in]{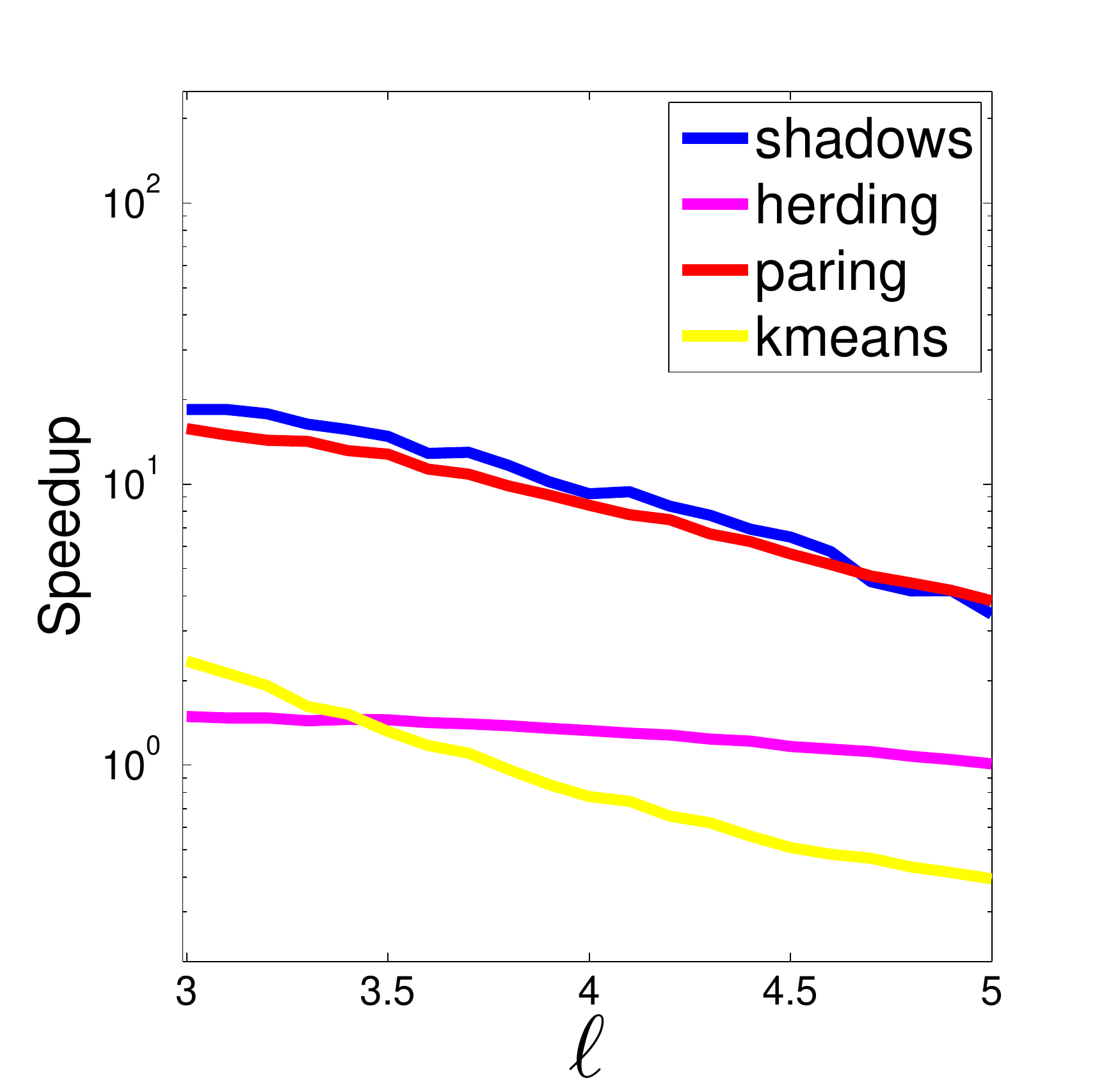}
  %\label{fig1:subfig1}
}
\caption{Classification comparisons w/RSDEs for \textbf{yale} as $\ell$ is
  varied ($\ntrain = 5,191$).}
\label{fig_center_yale}
\end{figure}

\section{Conclusion}
This paper presented (1) a reduced set KPCA algorithm for speeding up KPCA
given a reduce set density estimate of the training data, and (2) a simple,
efficient, single-pass algorithm for generating a suitable RSDE, %which we
called the shadow density estimate (ShDE), which relies on a user-selected
parameter $\ell$.  The spectral decomposition error was shown to be bounded
and directly related to the bound of the empirical error of the ShDE.  Through
ShDE+RSKPCA, significant reductions in both training and evaluation time are
achieved with minimal performance loss for large, redundant datasets.
Competitive overall speedups and performance were achieved versus Nystr\"{o}m methods.


\begin{thebibliography}{10}

\bibitem{Achlioptas:2002}
D.~Achlioptas, F.~McSherry, and B.~Sch{\"o}lkopf.
\newblock Sampling techniques for kernel methods.
\newblock In {\em NIPS}.

\bibitem{Arif:2011}
O.~Arif and P.A. Vela.
\newblock Kernel map compression for speeding the execution of kernel-based
  methods.
\newblock {\em IEEE Transactions on Neural Networks}, 22(6):870--879, 2011.

\bibitem{Belkin}
M.~Belkin and P.~Niyogi.
\newblock Laplacian {E}igenmaps for dimensionality reduction and data
  representation.
\newblock {\em Neural Computation}, 15(6):1373--1396, June 2003.

\bibitem{Bengio}
Y.~Bengio, P.~Vincent, and J-F. Paiement.
\newblock Learning eigenfunctions links {S}pectral {E}mbedding and {K}ernel
  {PCA}.
\newblock {\em Neural Computation}, 16(10):2197--2219, October 2004.

\bibitem{Chen:2010}
Y.~Chen, M.~Welling, and A.~Smola.
\newblock Super-samples from kernel herding.
\newblock In {\em Uncertainty in Artificial Intelligence}, 2010.

\bibitem{Coifman:2006}
R.R Coifman and S.~Lafon.
\newblock Diffusion maps.
\newblock {\em Applied and Computational Harmonic Analysis}, 21:5--30, 2006.

\bibitem{Drineas}
P.~Drineas and M.W. Mahoney.
\newblock On the {N}ystrom method for approximating a gram matrix for improved
  kernel-based learning.
\newblock {\em Journal of Machine Learning Research}, 6(10):2153--2175, October
  2005.

\bibitem{Freedman}
D.~Freedman and P.~Kisilev.
\newblock {KDE} {P}aring and a faster mean shift algorithm.
\newblock {\em {SIAM} Journal of Imaging Sciences}, 3(4):878--903, March 2010.

\bibitem{Ham}
J.~Ham, D.D. Lee, S.~Mika, and Bernhard Sch{\"o}lkopf.
\newblock A kernel view of the dimensionality reduction of manifolds.
\newblock In {\em ICML}, 2004.

\bibitem{Rosasco:2010}
L.~Rosasco, M.~Belkin, and E.D. Vito.
\newblock On learning with integral operators.
\newblock {\em Joural of Machine Learning Research}, 11:905--934, March 2010.

\bibitem{Scholkopf:1999}
B.~Sch{\"o}lkopf, S.~Mika, C.J.C. Burges, P.~Knirsch, K.~M{\"u}ller,
  G.~R{\"a}tsch, and A.J. Smola.
\newblock Input space versus feature space in kernel-based methods.
\newblock {\em IEEE Transactions on Neural Networks}, 10(5):1000--1017, 1999.

\bibitem{Scholkopf1998}
B.~Scholk{\"o}pf, A.~Smola, and R.~M{\"u}ller.
\newblock Nonlinear component analysis as a kernel eigenvalue problem.
\newblock {\em Neural Computation}, 10(5):1299--1319, July 1998.

\bibitem{Shawe-Taylor:2004}
J.~Shawe-Taylor and N.~Cristianini.
\newblock {\em Kernel Methods for Pattern Analysis}.
\newblock Cambridge University Press, New York, NY, USA, 2004.

\bibitem{Smola}
A.J. Smola, A.~Gretton, L.~Song, and B.~Scholkopf.
\newblock A {H}ilbert space embedding for distributions.
\newblock {\em Algorithmic Learning Theory}, 2007.

\bibitem{Tipping:2000}
M.~E. Tipping.
\newblock Sparse kernel principal component analysis.
\newblock In {\em NIPS}, pages 633--639, 2000.

\bibitem{Wang}
X.~Wang, P.~Tino, M.A. Fardal, S.~Raychaudhury, and A.~Babul.
\newblock Fast parzen window density estimator.
\newblock In {\em International Joint Conference on Neural Networks}, pages
  3267--3274, 2009.

\bibitem{Wasserman}
L.~Wasserman.
\newblock {\em All of Statistics}.
\newblock Springer, 2004.

\bibitem{Williams}
C.K.I. Williams and M.~Seeger.
\newblock The effect of the input density distribution on kernel-based
  classifiers.
\newblock In {\em ICML}, pages 1159--1166, 2000.

\bibitem{Zhang08}
K.~Zhang and J.T. Kwok.
\newblock Improved nystrom low rank approximation and error analysis.
\newblock In {\em ICML}, pages 1232--1239, 2008.

\bibitem{Zhang}
K.~Zhang and J.T. Kwok.
\newblock Density-weighted {N}ystrom method for computing large kernel
  eigensystems.
\newblock {\em Neural Computation}, 21(1):1299--1319, January 2010.

\bibitem{Zwald:2005}
L.~Zwald and G.~Blanchard.
\newblock On the convergence of eigenspaces in kernel principal component
  analysis.
\newblock In {\em NIPS}, 2005.

\end{thebibliography}
\end{document}